\documentclass[11pt,letterpaper]{article}
\usepackage[margin=1in]{geometry}
\usepackage{times}
\usepackage[authoryear,round]{natbib}
\setcitestyle{citesep={;},aysep={,},yysep={;}}
\usepackage{amsmath,amssymb,amsthm,mathtools,bm,mathrsfs}
\usepackage{booktabs,multirow,array,tabularx}
\usepackage{xcolor}

\usepackage[colorlinks=true,linkcolor=blue!55!black,citecolor=green!45!black,urlcolor=blue!55!black]{hyperref}
\usepackage[ruled,vlined,linesnumbered]{algorithm2e}
\SetAlgoNlRelativeSize{-1}
\usepackage[expansion=false]{microtype}
\usepackage{enumitem}
\usepackage{caption}
\newtheorem{theorem}{Theorem}[section]
\newtheorem{lemma}[theorem]{Lemma}
\newtheorem{proposition}[theorem]{Proposition}
\newtheorem{corollary}[theorem]{Corollary}
\theoremstyle{definition}

\newtheorem{remark}[theorem]{Remark}

\allowdisplaybreaks
\newcommand{\R}{\mathbb{R}}
\newcommand{\E}{\mathbb{E}}
\renewcommand{\Pr}{\mathbb{P}}
\newcommand{\Kset}{\mathcal{K}}
\newcommand{\Fs}{\mathcal{F}}
\newcommand{\Ucal}{\mathcal{U}}
\newcommand{\ip}[2]{\langle #1, #2\rangle}
\newcommand{\pos}[1]{\left[#1\right]_{+}}

\newcommand{\Reg}{\mathrm{Reg}}
\newcommand{\CCV}{\mathrm{CCV}}
\newcommand{\ICV}{\mathrm{ICV}}
\newcommand{\WCV}{\mathrm{WCV}}
\newcommand{\Gh}{\widehat{G}}

\newcommand{\KL}{\mathrm{KL}}

\newcommand{\Oti}{\widetilde{O}}
\DeclareMathOperator{\clip}{clip}

\newcommand{\LEDGER}{\textnormal{\textsc{Ledger}}}

\title{Constrained Online Learning with Noisy Constraint Values}
\author{Vaneet Aggarwal\\Purdue University}
\date{}

\hypersetup{pdftitle={Constrained Online Learning with Noisy Constraint Values},
 pdfauthor={Vaneet Aggarwal},pdfsubject={Finite-variance constrained online learning},
 pdfkeywords={online convex optimization, noisy constraints, regret, budget violation}}

\begin{document}
\maketitle

\begin{abstract}
We study constrained online convex optimization with adversarial constraints and
conditionally unbiased, finite-variance observations of constraint values and gradients.
Under common feasibility, our \LEDGER\ algorithm attains $O(\sqrt T)$ expected regret
and $O(\sqrt{T\log(eT)})$ expected budget violation, the largest cumulative overspend
over any window. It uses a reflected exponential potential, clipped signed observations,
and predictable adaptive regularization, with one feedback triple and one projection per
round. Neither a Slater condition, independence between feedback channels, nor an absolute
constraint-value bound is needed. A Gaussian testing lower bound proves that the budget
rate has optimal horizon dependence under square-root regret at fixed positive noise,
including the logarithm.
The same obstruction holds for terminal violation, so the logarithm is not a cost of
maximizing over windows; an $O(\sqrt T)$ budget bound instead forces linear regret.
In contrast, fixed positive Gaussian value noise yields a joint regret--hard-violation
lower bound of $\Omega(\min\{\sigma,1\}T/\log^2 T)$, even with exact gradients in one
dimension. The hard-violation construction matches arbitrarily many moments while
preserving a feasible-endpoint gap and constant endpoint probabilities. Together,
the bounds separate uncertainty about hard feasibility from learnable signed budgets.
Deterministic restarts remove the horizon input without changing either upper rate.
\end{abstract}

\section{Introduction}
\label{sec:intro}

Constrained online convex optimization (COCO) models sequential resource allocation when
both costs and constraints change over time. On round $t$, the learner chooses $x_t$ from a known convex domain, incurs loss
$\ell_t(x_t)$, and has net consumption $\gamma_t(x_t)$, where
$\gamma_t(x)\le0$ means that action $x$ satisfies the round's constraint.
For consumption $c_t(x)$ and allowance $b_t$, this is
$\gamma_t(x)=c_t(x)-b_t$. With exact feedback, it is possible to achieve
$O(\sqrt T)$ regret against an action feasible on every round while keeping the sum of
positive constraint violations at $O(\sqrt T)$
\citep{sarkar2026improved}. In a noisy observation model, feasibility
itself is uncertain: an unbiased estimate of a constraint value need not reveal whether
the action satisfied the constraint. This distinction is not addressed by allowing noisy
gradients while retaining exact values, or by assuming a fixed distribution of stochastic
constraints~\citep{yu2017online,wei2020online,yu2026withoutslater}.

We study adversarial convex losses and constraints observed through three local
quantities: a loss direction, a constraint direction, and a scalar constraint value.
The two directions guide the action update, but the scalar identifies the constraint
level. Without it, the learner cannot distinguish $\gamma_t(x)\le0$ from $\gamma_t(x)\le c$ for a constant $c$:
the functions $\gamma_t$ and $\gamma_t-c$ have identical gradients but different feasible sets.
An additive loss offset cancels from regret; a constraint offset does not cancel from
feasibility or budget overspend. We observe one noisy estimate of each quantity per round.
The observation errors have conditional mean zero
and bounded conditional second moments; they need not be independent or symmetric.
We ask which joint regret--violation guarantees are possible with this information.
The answer depends on the violation metric.

Hard violation, $\ICV_T=\sum_t[\gamma_t(x_t)]_+$, charges every infeasible round
separately. We prove that Gaussian value noise of fixed standard deviation $\sigma>0$ makes the worst-case
maximum of expected regret and expected hard violation at least
$\Omega(\min\{\sigma,1\}T/\log^2 T)$. This rules out every jointly polynomially
sublinear pair, even for one-dimensional affine constraints with known gradients.
We therefore consider \emph{budget violation},
\[
\WCV_T=\max\left\{0,\max_{1\le s\le e\le T}\sum_{t=s}^e\gamma_t(x_t)\right\}.
\]
It allows underspending to offset overspending within a window, but an early surplus
cannot hide every later deficit. For example, alternating levels $-1,+1$ have zero
terminal violation, hard violation $T/2$, and budget violation $1$ for even $T$.
This objective is appropriate when resources can be balanced across rounds, not when
every infeasible action is unacceptable. Maximum-window accounting already appears in
the exact-feedback constraint-satisfaction problem of \citet[Sec.~3]{sinha2024optimal};
the metric itself is not new.

\paragraph{Contributions.}
The paper gives both achievable guarantees and statistical obstructions for this noisy
local-feedback model.
\begin{enumerate}[leftmargin=1.6em,itemsep=2pt,topsep=2pt]
\item \textbf{Square-root regret with optimal-order budget control.}
Under common feasibility, \LEDGER\ achieves $O(\sqrt T)$ expected regret and
$O(\sqrt{T\log(eT)})$ expected maximum-window violation
(Theorem~\ref{thm:exp}). It uses one feedback triple and one projection onto the known
decision set per round. Only conditional second moments are needed. In particular,
the algorithm does not require a known bound on absolute constraint values.
Deterministic restarts remove the horizon input without changing the rates
(Corollary~\ref{cor:anytime}). The guarantee controls the true maximum overspend
before expectation, rather than only a noisy balance or a terminal signed sum.
\item \textbf{A matching budget lower bound and an endpoint obstruction.}
A two-instance Gaussian test gives a regret--budget tradeoff
(Proposition~\ref{prop:lower-wcv-log}). Uniform $O(\sqrt T)$ expected regret forces
$\Omega(\sigma\sqrt{T\log T})$ expected budget violation for fixed $\sigma>0$.
Moreover, exact $O(\sqrt T)$ budget violation forces linear worst-case regret.
A joint $\Omega(\min\{\sigma,1\}\sqrt{T\log T})$ lower bound also matches the
minimax order of the larger of the two criteria (Corollary~\ref{cor:budget-joint}).
The obstruction already holds for terminal violation and known exact gradients:
the logarithm reflects statistical uncertainty about the constraint level, not the
number of windows or difficulty estimating a direction.
\item \textbf{Nearly linear hard-violation obstruction.}
Moment-matched distributions with separated support endpoints give
$\Omega(\sigma T^{1-1/(4n)})$ for each fixed matching order and
$\Omega(\min\{\sigma,1\}T/\log^2T)$ when the order grows
(Theorem~\ref{thm:lower-n}). An explicit Chebyshev interpolation construction keeps the
endpoint probabilities bounded away from zero. This is what prevents the testing
reduction from losing additional powers of the matching order.
\end{enumerate}

\begin{table}[t]
\centering\small
\renewcommand{\arraystretch}{1.18}
\begin{tabularx}{\linewidth}{@{}>{\raggedright\arraybackslash}X
>{\raggedright\arraybackslash}p{2.35cm}
>{\raggedright\arraybackslash}p{3.25cm}@{}}
\toprule
\textbf{Feedback / guarantee} & \textbf{Regret} & \textbf{Violation}\\
\midrule
Exact; potential methods\newline
\citep{sinha2024optimal,yu2026withoutslater}
& $O(\sqrt T)$ & hard: $O(\sqrt T\log T)$\\
Exact; nested projections\newline
\citep{sarkar2026improved}
& $O(\sqrt T)$ & hard: $O(\sqrt T)$\\
Noisy values and directions;\newline
\LEDGER\ (Theorem~\ref{thm:exp})
& $O(\sqrt T)$ & budget: $O(\sqrt{T\log(eT)})$\\
Noisy values; necessary budget\newline
(Proposition~\ref{prop:lower-wcv-log})
& $O(\sqrt T)$ required & budget: $\Omega(\sigma\sqrt{T\log T})$\\
\bottomrule
\end{tabularx}
\caption{\textbf{Joint guarantees under common feasibility.}
Rates are for fixed problem bounds; noisy-feedback bounds are in expectation.
The last row is a lower bound conditioned on a uniform regret guarantee.
Exact-feedback methods have different oracle access. The noisy upper bound matches
horizon dependence for fixed positive noise.}
\label{tab:results}
\end{table}

\paragraph{Technical approach.}
We couple signed reflection with adaptive regularization, building on established
queue and potential methods~\citep{lindley1952theory,sinha2024optimal,yu2026withoutslater}.
A zero boundary derivative and clipping make the exponential drift compatible with
finite variance. The regularizer cancels the weighted primal cost, while its
unweighted sum remains small enough to charge back to true overspend. This yields
$O(\log(eT))$ additional regret and $O(\sqrt{T\log(eT)})$ budget cost.
Section~\ref{sec:proof-design} develops the coupled argument.

\paragraph{Related work.}
\label{sec:related}
Long-term-constraint OCO began with \citet{mahdavi2012trading}. Adversarial time-varying
constraints without Slater were studied by \citet{yi2022regret,guo2022online}, followed by
near-optimal and optimal exact-feedback regret--hard-violation guarantees
\citep{sinha2024optimal,sarkar2026improved}. Stochastic-constraint methods instead use a
fixed constraint distribution and a comparator feasible in expectation
\citep{yu2017online,wei2020online,yu2026withoutslater}. Our comparator requirement is
stronger, but the constraint sequence and conditional observation laws may vary
adversarially. The fixed-length comparator windows of
\citet{liakopoulos2019cautious} differ from our learner-side maximum overspend.
For the lower bounds, Gaussian smoothing and moment matching have precedents in
deconvolution and statistical estimation
\citep{fan1991optimal,wu2019chebyshev}; the online regret--violation reduction and the
feasible-endpoint construction are given explicitly here.
Appendix~\ref{app:related} provides the detailed comparison.

\section{Setting}
\label{sec:setting}

Write $[a]_+=\max\{a,0\}$; logarithms are natural. Fix a compact convex set
$\Kset\subseteq\R^d$ of Euclidean diameter at most $D>0$ and a horizon $T\ge1$.
On round $t$, the learner plays $x_t\in\Kset$ and incurs a convex $G$-Lipschitz loss
$\ell_t(x_t)$ and a convex $G$-Lipschitz constraint level $v_t=\gamma_t(x_t)$.
The constraint is $\gamma_t(x)\le0$; a per-round allowance is absorbed into $\gamma_t$.
The adversary may depend on the past and the current action, but not current feedback
noise. The functions are not revealed. The learner has projection access to $\Kset$
and receives only the local feedback defined below.

Let $\Fs_t$ be the pre-feedback $\sigma$-field containing the history, $x_t$, the
adversary's functions, and selected subgradients
\begin{equation}
\label{eq:vt}
p_t\in\partial\ell_t(x_t),\qquad q_t\in\partial\gamma_t(x_t),\qquad v_t=\gamma_t(x_t).
\end{equation}
For known $\Gh\ge G>0$ and $\sigma\ge0$, the oracle returns
$(\tilde p_t,\tilde q_t,\tilde v_t)$ with
\begin{equation}
\label{eq:feedback}
\E[\tilde p_t\mid\Fs_t]=p_t,\quad
\E[\tilde q_t\mid\Fs_t]=q_t,\quad
\tilde v_t=v_t+\varepsilon_t,\quad \E[\varepsilon_t\mid\Fs_t]=0,
\end{equation}
\begin{equation}
\label{eq:finite-variance}
\E[\|\tilde p_t\|^2\mid\Fs_t],\ \E[\|\tilde q_t\|^2\mid\Fs_t]\le\Gh^2,
\qquad \E[\varepsilon_t^2\mid\Fs_t]\le\sigma^2.
\end{equation}
The three errors may be dependent on the same round, and their conditional distributions
may vary with history. No symmetry, almost-sure observation bound, or sub-Gaussian tail
is assumed. The value condition is stated for the error so that a bound on absolute
constraint levels is unnecessary. Exact feedback makes all three channels exact;
$\sigma=0$ alone makes only the value channel exact.

Set $\mathcal G_t=\sigma(\Fs_t,\tilde p_t,\tilde q_t,\tilde v_t)$ and
$\mathcal G_0=\Fs_1$. The fields satisfy
$\mathcal G_{t-1}\subseteq\Fs_t\subseteq\mathcal G_t$.
Conditional mean-zero increments given $\Fs_t$ are therefore martingale differences
for $(\mathcal G_t)$ by the tower property. All algorithmic weights and step sizes are
chosen from the learner's past, hence are $\mathcal G_{t-1}$-measurable.
We use \emph{predictable} in this usual sense. The larger analysis field $\Fs_t$
also includes the adversary's current choices, and the new noise is centered conditional
on these choices. The learner need not observe all the contents of $\Fs_t$.

\paragraph{Comparators and metrics.}
Let
\[
\Ucal=\{u\in\Kset:\Pr(\gamma_t(u)\le0\text{ for all }t\le T)=1\}.
\]
We assume $\Ucal\ne\emptyset$: there is a deterministic action feasible on every round
almost surely. No strict-feasibility margin is required. The expected-regret guarantee
holds for each fixed $u\in\Ucal$, not for a hindsight action selected using feedback noise.
For deterministic oblivious functions this includes the best fixed feasible action,
which is independent of observation noise.
Define
\begin{align}
\Reg_n(u)&=\sum_{t=1}^n\big(\ell_t(x_t)-\ell_t(u)\big),
&\WCV_n&=\max\left\{0,\max_{1\le s\le e\le n}\sum_{t=s}^e v_t\right\},
\label{eq:metrics}\\
\ICV_n&=\sum_{t=1}^n[v_t]_+,
&\CCV_n&=\left[\sum_{t=1}^n v_t\right]_+.
\nonumber
\end{align}
These are regret, budget violation, hard violation, and terminal violation, respectively.
Always
\begin{equation}
\label{eq:metric-order}
\CCV_n\le\WCV_n\le\ICV_n.
\end{equation}
All three violation metrics coincide when every $v_t\ge0$.
Our budget guarantee bounds the \emph{expectation of the maximum} over windows, rather
than a maximum of expectations or a deterministic safety limit. The classical Lindley
recursion~\citep{lindley1952theory},
$\Lambda_0=0$, $\Lambda_t=[\Lambda_{t-1}+v_t]_+$, satisfies
$\WCV_n=\max_{t\le n}\Lambda_t$. Here $\Lambda_t$ is the largest nonnegative
sum over windows ending at $t$: reflection discards a negative accumulated balance
so that a new window can start. The learner observes noisy values, so it maintains
a clipped, regularized version of this balance.

\paragraph{Feasibility and local comparison.}
Convexity gives the affine minorant
\begin{equation}
\label{eq:anchor}
\widehat\gamma_t(z)=v_t+\langle q_t,z-x_t\rangle\le\gamma_t(z),\qquad
A_t(z)=\tilde v_t+\langle\tilde q_t,z-x_t\rangle.
\end{equation}
For every $\Fs_t$-measurable $z\in\Kset$, the estimation error
$A_t(z)-\widehat\gamma_t(z)$ has conditional mean zero.
At the played action, $A_t(x_t)=\tilde v_t$; differences satisfy
$A_t(x_t)-A_t(z)=\langle\tilde q_t,x_t-z\rangle$.
Thus value noise affects the balance update, whereas direction noise affects the primal
comparison. This is the supporting-hyperplane inequality
\citep[Sec.~3.1]{boyd2004convex}, applied before any nonlinear transformation of the
observed value.

\begin{lemma}[A bounded level for analysis only]
\label{lem:anchor}
Set $L=D\Gh$ and $v_t^\circ=\max\{v_t,-L\}$. For every fixed $u\in\Ucal$,
\begin{equation}
\label{eq:effective-level}
-L\le v_t^\circ\le L,\qquad v_t\le v_t^\circ\le\langle q_t,x_t-u\rangle.
\end{equation}
\end{lemma}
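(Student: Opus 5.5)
The plan is to derive everything from the supporting-hyperplane inequality \eqref{eq:anchor} evaluated at the comparator, together with the bound $\|q_t\|\le G$ and the diameter bound on $\Kset$. All statements are almost-sure, on the probability-one event where $\gamma_t(u)\le0$ for every $t\le T$; this event is given by the definition of $\Ucal$ for the fixed $u$.

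\textbf{Step 1: the comparison inequality for $v_t$.} I apply \eqref{eq:anchor} with $z=u$, which is deterministic and hence $\Fs_t$-measurable:
\[
v_t+\langle q_t,u-x_t\rangle\le\gamma_t(u)\le0 .
\]
Rearranging gives $v_t\le\langle q_t,x_t-u\rangle$.

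\textbf{Step 2: the two-sided bound on the inner product.} Since $\gamma_t$ is convex and $G$-Lipschitz on $\Kset$, I take the selected subgradient to satisfy $\|q_t\|\le G$. This is standard for Lipschitz convex functions, and it is the convention implicit in the setting. Also $x_t,u\in\Kset$, so $\|x_t-u\|\le D$. By Cauchy--Schwarz,
\[
|\langle q_t,x_t-u\rangle|\le GD\le\Gh D=L .
\]
Hence $-L\le\langle q_t,x_t-u\rangle\le L$.

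\textbf{Step 3: assembling the claim.} The inequality $v_t\le v_t^\circ$ and the lower bound $v_t^\circ\ge-L$ both hold by the definition of the maximum. For the upper comparison, $v_t^\circ$ is the larger of $v_t$ and $-L$. By Steps 1 and 2, each of these two numbers is at most $\langle q_t,x_t-u\rangle$. Therefore $v_t^\circ\le\langle q_t,x_t-u\rangle$, and combining with Step 2 gives $v_t^\circ\le L$. This yields \eqref{eq:effective-level}.

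The only point needing care is Step 2. The Lipschitz property must control the \emph{selected} subgradient $q_t$, including when $x_t$ lies on the boundary of $\Kset$, where subdifferentials can be unbounded. I would handle this by reading the standing assumption as: the subgradients supplied in \eqref{eq:vt} satisfy $\|q_t\|\le G$, the usual convention in OCO. If one insists on a bound only through the oracle, the weaker estimate $\|q_t\|=\|\E[\tilde q_t\mid\Fs_t]\|\le\Gh$ also follows from \eqref{eq:finite-variance} by Jensen's inequality. This estimate gives exactly the constant $L=D\Gh$, so the conclusion holds either way.
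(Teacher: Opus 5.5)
Your proof is correct and follows the paper's argument. The subgradient inequality at the feasible comparator gives $v_t\le\langle q_t,x_t-u\rangle$, the bounds $\|q_t\|\le\Gh$ and $\|x_t-u\|\le D$ put the inner product in $[-L,L]$, and taking the maximum with $-L$ finishes; the paper obtains the norm bound exactly through your fallback, $\|q_t\|=\|\E[\tilde q_t\mid\Fs_t]\|\le\Gh$ by conditional Jensen, which avoids the boundary-subgradient concern you raised, so you may as well make that the primary route.
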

\begin{proof}
Feasibility and convexity give $v_t\le\langle q_t,x_t-u\rangle$.
Conditional Jensen gives $\|q_t\|\le\Gh$, so the inner product lies in $[-L,L]$.
Taking the maximum of $v_t$ and $-L$ proves the claim.
\end{proof}

In particular $v_t\le L$, although $v_t$ may be arbitrarily negative.
The algorithm does not observe or use $v_t^\circ$.
Its role is to bound potential drift by monotonicity while retaining the same primal
comparison. This removes any need for a known two-sided level bound.

\section{Statistical limits with noisy values}
\label{sec:lower}

All lower bounds use $\Kset=[0,1]$, $\ell_t(x)=-x$, exact gradients, and independent
$N(0,\sigma^2)$ value noise, with $D=G=\Gh=1$ and $|\gamma_t(x)|\le1$.
The adversary is oblivious. Expectations in the hard-violation construction also
include constraints drawn before play; Appendix~\ref{app:lower} gives the
deterministic-instance extraction and complete proofs.

\subsection{Hard violation}

For $\gamma_t(x)=x-\frac12+\zeta_t$, the largest always-feasible action depends on
the upper support endpoint of $\zeta_t$, not its mean. Gaussian smoothing makes
laws with different endpoints but many matching moments difficult to distinguish.

\begin{theorem}[Hard-violation lower bound]
\label{thm:lower-n}
For every $\sigma>0$, integer $n\ge1$, and horizon $T\ge1$, set
\begin{equation}
\label{eq:lower-scale}
s=\min\left\{\frac12,\frac\sigma2,
\sigma\left(\frac{(2n)!}{12T}\right)^{1/(4n)}\right\}.
\end{equation}
There are explicit symmetric laws $P_n,P_n'$ supported in $[-s,s]$, with an action
$u_n$ feasible on every round under $P_n$, such that every learner satisfies
\begin{equation}
\label{eq:lower-n-finite}
\max\{\E_{P_n}\Reg_T(u_n),\E_{P_n}\ICV_T,\E_{P_n'}\ICV_T\}
\ge\frac{sT}{256n^2}.
\end{equation}
For fixed $n,\sigma$ and all sufficiently large $T$, this is at least
$\sigma T^{1-1/(4n)}/(512n^2)$.
Choosing the smallest $n$ such that $(2n)!16^n\ge12T$ gives
\begin{equation}
\label{eq:lower-growing}
\Omega\!\left(\min\{\sigma,1\}T
\left(\frac{\log\log T}{\log T}\right)^2\right),
\end{equation}
and hence $\Omega(\min\{\sigma,1\}T/\log^2 T)$, with absolute implied constants.
\end{theorem}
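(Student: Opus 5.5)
The plan is a two-point testing argument with the instance $\gamma_t(x)=x-\tfrac12+\zeta_t$, $\zeta_t$ i.i.d.\ from a law $P\in\{P_n,P_n'\}$, $\ell_t(x)=-x$ on $\Kset=[0,1]$, exact gradients $p_t=-1$, $q_t=1$, and $\tilde v_t=v_t+\varepsilon_t$ with $\varepsilon_t\sim N(0,\sigma^2)$. Since the learner knows $x_t$, its entire information is the sample sequence $Y_t=\zeta_t+\varepsilon_t$, i.i.d.\ from $P*N(0,\sigma^2)$, and its play depends on nothing else. I will construct $P_n,P_n'$ symmetric on $[-s,s]$ so that:
\begin{enumerate}[label=(\roman*),leftmargin=2em,itemsep=1pt,topsep=2pt]
\item they match moments of order $1,\dots,2n-1$ (the odd ones vanish by symmetry);
\item the upper endpoint of $P_n$ is $a<s$, while $P_n'$ puts mass at least an absolute constant $c_0$ on $\{s\}$;
\item the gap satisfies $\delta=s-a\ge s/(Cn^2)$.
\end{enumerate}
The comparator is $u_n=\tfrac12-a$. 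It is feasible on every round under $P_n$ and lies in $[0,1]$ because $s\le\tfrac12$.

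\textbf{Construction.} I intend to use Chebyshev nodes $\xi_k=s\cos(k\pi/(2n))$, $k=0,\dots,2n$. Degree-$(2n-1)$ polynomial exactness of Chebyshev--Lobatto quadrature gives a signed measure $\mu=\sum_k w_k\delta_{\xi_k}$ with alternating weights that annihilates all polynomials of degree $\le 2n-1$. Taking $\mu=P_n'-P_n$, with $P_n'=\mu_++(\cdot)$ and $P_n=\mu_-+(\cdot)$ after normalising $|\mu|$ to mass $2$, puts the endpoints $\pm s$ (where the Lobatto weights are nonnegligible) into $P_n'$. Then $P_n$ lives on the remaining nodes, whose largest element is $s\cos(\pi/(2n))$, so $\delta=s(1-\cos(\pi/(2n)))\ge s/(Cn^2)$. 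Symmetrisation is automatic because the nodes are symmetric.

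The main obstacle is showing that the endpoint weight of $P_n'$ is bounded below by an absolute constant rather than decaying in $n$. This is the "constant endpoint probability" claim. I would compute the Lobatto weights explicitly, $w_0\asymp 1/(2n)$ versus an interior total $\asymp 1$. If the naive normalisation loses a factor of $n$, I would instead interpolate a Chebyshev polynomial $T_{2n}$ sign pattern with endpoint-weighted coefficients. That is, I would choose $\mu$ as the divided-difference functional on the nodes, whose endpoint coefficients are comparable to the total variation.

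\textbf{Indistinguishability.} For Gaussian location mixtures, $\chi^2(P*N\,\|\,P'*N)$ is bounded by expanding in Hermite polynomials:
\[
\chi^2\le e^{s^2/(2\sigma^2)}\sum_{k\ge 2n}\frac{(m_k-m_k')^2}{k!\,\sigma^{2k}}\le C'\frac{(s/\sigma)^{4n}}{(2n)!},
\]
using $|m_k-m_k'|\le 2s^k$ and $s\le\sigma/2$ to sum the geometric tail. By the choice in \eqref{eq:lower-scale}, this is at most $1/(12T)$ up to constants. Tensorising over $T$ rounds gives $\KL\le T\chi^2\le 1/12$ up to constants, so by Pinsker $\mathrm{TV}(\Pr_{P_n}^{Y_{1:T}},\Pr_{P_n'}^{Y_{1:T}})\le 1/4$.

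\textbf{Reduction.} Let $N=\#\{t:x_t\ge u_n-\delta/2\}$.
\begin{itemize}[leftmargin=1.6em,itemsep=1pt,topsep=2pt]
\item Under $P_n$, each round outside $N$ costs regret $u_n-x_t>\delta/2$, so $\E_{P_n}\Reg_T(u_n)\ge(\delta/2)(T-\E_{P_n}N)$.
\item Under $P_n'$, on a round in $N$ the independent draw $\zeta_t=s$ occurs with probability at least $c_0$ and then gives $v_t\ge\delta/2$. Hence $\E_{P_n'}\ICV_T\ge c_0(\delta/2)\E_{P_n'}N$.
\item Since $N/T\in[0,1]$ is a function of the observations, $|\E_{P_n}N-\E_{P_n'}N|\le T/4$.
\end{itemize}
Adding the first two bounds with weights gives $\max\ge c_0\delta T/8\ge sT/(256n^2)$ after tracking constants. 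The $\E_{P_n}\ICV_T$ term is not needed, but including it only weakens the claim.

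\textbf{Asymptotics.} For fixed $n$ and large $T$, the third term of \eqref{eq:lower-scale} is the minimum, and $((2n)!/12)^{1/(4n)}\ge 1/2$, which gives $\sigma T^{1-1/(4n)}/(512n^2)$. For the growing-order rate, take the smallest $n$ with $(2n)!16^n\ge12T$. Then $(2n)!/(12T)\ge 16^{-n}$, so the third term is at least $\sigma/2$ and $s=\min\{\tfrac12,\tfrac\sigma2\}$. Stirling gives $n\asymp\log T/\log\log T$, which yields \eqref{eq:lower-growing}.
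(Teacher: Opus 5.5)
Your overall plan is the paper's: Gaussian smoothing of symmetric moment-matched laws, KL tensorization, and a two-world test. But the two points that produce the constant $1/(256n^2)$ and the $(\log\log T/\log T)^2$ rate are unresolved, and one reduction step is false.

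\textbf{Endpoint masses.} On the $2n+1$ points $s\cos(k\pi/(2n))$, the annihilator of polynomials of degree at most $2n-1$ is one-dimensional. It is the $2n$-th divided difference, whose weights are proportional to $(-1)^k\delta_k$, with $\delta_0=\delta_{2n}=\tfrac12$ and $\delta_k=1$ otherwise. Equivalently, it is the Lobatto rule minus the first-kind Gauss--Chebyshev rule. Hence $P_n'(\{s\})=1/(2n)$ and the top atom of $P_n$ has mass $1/n$. Your fallback of taking the divided difference on the nodes returns this same functional, so it does not help. The missing idea is to move the endpoints off the grid. The paper interpolates at the extrema $a\cos(j\pi/m)$, $m=2n-2$, of the shrunk interval $[-a,a]$ with $a=s/\cosh(1/m)$, and evaluates at the exterior point $s$. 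Because the Lagrange coefficients alternate in sign, $\sum_j|\lambda_j|=\mathrm T_m(\cosh(1/m))=\cosh1$ is an absolute constant. After symmetrization, both $P_n'(\{s\})$ and $P_n(\{a\})$ are therefore at least $1/(1+\cosh1)>1/3$, while still $s-a\ge s/(16n^2)$. This is again a divided difference on $2n+1$ points; only the node placement changes.

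\textbf{Negative regret.} Your bound $\E_{P_n}\Reg_T(u_n)\ge(\delta/2)(T-\E_{P_n}N)$ is false. Rounds with $x_t>u_n$ have regret $u_n-x_t$ as low as $-(1/2+a)$. For the learner that always plays $x_t=1$, the right side is $0$ but the regret is $-(1/2+a)T$. You cannot move this loss to $P_n'$ through the total-variation bound: $\sum_t[x_t-u_n]_+$ ranges over an interval of order $T$, not $\delta T$, so a constant TV bound costs order $T$. The term $\E_{P_n}\ICV_T$ is therefore essential. The paper uses $\E_{P_n}\ICV_T\ge w_a\E_{P_n}\sum_t[x_t-u_n]_+$, which comes from the top atom of $P_n$, to cancel negative regret within the same world (Lemma~\ref{lem:payoff}). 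It then transfers only the truncated statistic $\sum_t\min\{V_{P_n'}(y_t)/w_b,\,b-a\}$, whose range is $T(b-a)$. This requires $w_a$, not only $w_b$, to be bounded below. With your Lobatto laws, the corrected argument (Proposition~\ref{prop:master-lower}) yields only about $sT/n^3$. That fails the stated $sT/(256n^2)$ for large $n$ and weakens the growing-order rate to $T(\log\log T/\log T)^3$. A minor point: at the prescribed scale the KL bound gives total variation at most $1/2$, not $1/4$, which affects only constants.
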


Chebyshev interpolation matches $2n-1$ moments while keeping a support gap of
order $s/n^2$ and constant endpoint masses. The weighted payoff inequality in
Lemma~\ref{lem:payoff} accounts for negative regret without shrinking the testing
tolerance with the gap. This excludes every joint $O(T^{1-\delta})$
regret--hard-violation guarantee for fixed $\delta>0$, but does not exclude all
sublinear rates. Appendix~\ref{app:lower-constructions} gives the construction.

\subsection{Budget violation}

For budget violation, consider constant affine constraints
\begin{equation}
\label{eq:budget-pair}
\gamma_t^A(x)=\rho(x-1),\qquad\gamma_t^B(x)=\rho x,
\qquad 0<\rho\le1.
\end{equation}
The optimal feasible comparators are $1$ in $A$ and $0$ in $B$, with identical
constraint gradients. The transformed observations $\tilde v_t-\rho x_t$ have
action-independent Gaussian laws with means $-\rho$ and $0$.

\begin{proposition}[Regret--budget tradeoff]
\label{prop:lower-wcv-log}
Fix $\sigma>0$. Suppose a learner has uniform expected regret and budget bounds
$R_T$ and $\mathcal B_T$ over the normalized instance class above.
For $0<\mathcal B_T\le T/2$,
\begin{equation}
\label{eq:lower-wcv-log}
R_T\ge\frac T4\exp\!\left(-\frac{4\mathcal B_T^2}{\sigma^2T}\right).
\end{equation}
When $0<R_T<T/4$, this implies
\begin{equation}
\label{eq:budget-inverse}
\mathcal B_T\ge\frac\sigma2\sqrt{T\log\!\left(\frac{T}{4R_T}\right)}.
\end{equation}
The same statements hold with terminal violation in place of budget violation.
\end{proposition}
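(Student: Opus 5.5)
We reduce to a two-point Gaussian test between the instances $A$ and $B$ of \eqref{eq:budget-pair}, choosing $\rho$ of order $\mathcal B_T/T$.

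\textbf{Observations.} Losses $\ell_t(x)=-x$ have exact gradient $-1$, and constraint gradients equal $\rho$ in both instances. The only informative channel is therefore the transformed value sequence $Y_t=\tilde v_t-\rho x_t$. Under $A$ the $Y_t$ are i.i.d.\ $N(-\rho,\sigma^2)$, and under $B$ they are i.i.d.\ $N(0,\sigma^2)$, whatever the learner does.

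\textbf{Regret and violation.} Let $S=\sum_t x_t$. In $A$ the comparator $u=1$ is feasible, so $\Reg_T=T-S$. In $B$ the comparator $u=0$ is feasible and $\ell_t(0)=0$, so $\Reg_T=-S\le 0$. Regret in $B$ is thus useless. Instead, $v_t=\rho x_t\ge0$ in $B$, so all three violation metrics coincide, and $\WCV_T=\CCV_T=\rho S$.

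\textbf{Key steps.}
\begin{enumerate}
\item Since the learner's randomization is common to both instances, the laws of the full transcripts satisfy $\KL(P_B\|P_A)=T\rho^2/(2\sigma^2)$ by the chain rule.
\item Define the event $E=\{S\ge T/2\}$. The two guarantees give Markov-type bounds:
\[
\Pr_A(E^c)\le \frac{\E_A(T-S)}{T/2}\le \frac{2R_T}{T},
\qquad
\Pr_B(E)\le \frac{\E_B(\rho S)}{\rho T/2}\le \frac{2\mathcal B_T}{\rho T}.
\]
\item Apply the Bretagnolle--Huber inequality:
\[
\Pr_A(E^c)+\Pr_B(E)\ge \tfrac12 e^{-\KL(P_B\|P_A)} = \tfrac12 e^{-T\rho^2/(2\sigma^2)}.
\]
\item Choose $\rho=4\mathcal B_T/T$. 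This satisfies $\rho\le1$ only when $\mathcal B_T\le T/4$. The stated range is $\mathcal B_T\le T/2$, so either $\rho=2\mathcal B_T/T$ works with adjusted thresholds, or we use $E=\{S\ge T/4\}$ to match the constants.
\item With $\rho=4\mathcal B_T/T$, step 2 gives $\Pr_B(E)\le 1/2$, which is too weak. Take $\rho$ larger, or use a threshold event, so that $\Pr_B(E)\le 1/4$. The exponent is then $T\rho^2/(2\sigma^2)=c\,\mathcal B_T^2/(\sigma^2T)$. Rearranging gives $2R_T/T\ge \tfrac12 e^{-c\mathcal B_T^2/(\sigma^2T)}-\tfrac14$.
\end{enumerate}

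\textbf{Main obstacle: constants.} The subtracted term $\tfrac14$ ruins a clean exponential form, so it must be avoided. Instead use the averaged form of Bretagnolle--Huber,
\[
\frac{R_T}{T/2}+\frac{\mathcal B_T}{\rho T/2}\ \ge\ \tfrac12\exp\!\left(-\frac{T\rho^2}{2\sigma^2}\right),
\]
and optimize over $\rho\in(0,1]$. One might expect the choice $\rho^2=8\mathcal B_T^2/T^2$ to give an exponent of $4\mathcal B_T^2/(\sigma^2T)$ after rescaling. That appears inconsistent, so the exponent $4\mathcal B_T^2/(\sigma^2 T)$ suggests the intended choice is $\rho=2\sqrt2\,\mathcal B_T/T$. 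Getting exactly $T/4$ as the prefactor may instead require a direct Le Cam or total-variation argument on $\E_A(T-S)+\E_B S$ rather than Markov, for instance via
\[
\E_A(T-S)+\E_B S\ \ge\ T\int\min(dP_A,dP_B).
\]
I would try that route first. It yields $R_T+\mathcal B_T/\rho\ge \tfrac T2 e^{-T\rho^2/(2\sigma^2)}$. Setting $\rho=4\mathcal B_T/T$ makes $\mathcal B_T/\rho=T/4$ and leaves the exponential term, with the exponent to be reconciled to the stated constant. The requirement $\rho\le 1$ restricts this choice to $\mathcal B_T\le T/4$, and the case $T/4<\mathcal B_T\le T/2$ must be handled separately.

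\textbf{Consequences.} \eqref{eq:budget-inverse} follows by inverting the exponential. The terminal-violation version holds verbatim, since $\CCV_T=\WCV_T$ in $B$ and violation was used only in $B$.
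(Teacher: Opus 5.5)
Your reduction is sound: the transformed values $Y_t=\tilde v_t-\rho x_t$ carry all the information, $\Reg_T(1)=T-S$ in $A$, $\WCV_T=\CCV_T=\rho S$ in $B$, and the seed is common to both worlds. The gap is in the testing inequality itself, and it is not a matter of constants. Your final bound $R_T+\mathcal B_T/\rho\ge\frac T2e^{-T\rho^2/(2\sigma^2)}$ is additive in the two error terms; the Bretagnolle--Huber/Markov version is weaker still. It therefore says something about $R_T$ only if $\mathcal B_T/\rho<\frac T2e^{-T\rho^2/(2\sigma^2)}$, that is, $\rho e^{-T\rho^2/(2\sigma^2)}>2\mathcal B_T/T$. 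The left side is at most $e^{-1/2}\sigma/\sqrt T$, so the bound is vacuous for every $\rho$ once $\mathcal B_T\ge\sigma\sqrt T/(2\sqrt e)$. Your choice $\rho=4\mathcal B_T/T$ gives $R_T\ge\frac T2e^{-8\mathcal B_T^2/(\sigma^2T)}-\frac T4$, which is already nonpositive for $\mathcal B_T\ge\sigma\sqrt{T\log 2/8}$. The proposition's content lies beyond that barrier. For $\mathcal B_T$ of order $\sigma\sqrt{T\log T}$, \eqref{eq:lower-wcv-log} asserts a positive but polynomially small lower bound on $R_T$, and this produces the logarithm in \eqref{eq:budget-inverse}. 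An additive Le Cam argument can only recover a $\sigma\sqrt T$ barrier.

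The missing idea is a change of measure that is multiplicative in the small quantity $\E_AZ$, where $Z=T^{-1}\sum_t(1-x_t)\in[0,1]$. The paper takes $J=dP_B/dP_A$ on transcripts, seed included, and uses $\E_AJ^2=e^{T\rho^2/\sigma^2}$ from the Gaussian moment-generating function. Cauchy--Schwarz with $Z^2\le Z$ then gives
\[
1-\frac{\E_B\WCV_T}{\rho T}=\E_BZ=\E_A[JZ]\le e^{T\rho^2/(2\sigma^2)}\sqrt{\E_A\Reg_T(1)/T}.
\]
Take $\rho=2\mathcal B_T/T$. It lies in $(0,1]$ exactly when $0<\mathcal B_T\le T/2$, so no separate case is needed. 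The left side is then at least $1/2$, and squaring gives \eqref{eq:lower-wcv-log} with the stated constants.

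If you prefer to stay with KL, use data processing for the $[0,1]$-valued $Z$: $\mathrm{kl}(\E_BZ,\E_AZ)\le\KL(P_B\|P_A)=T\rho^2/(2\sigma^2)$. Combined with $\mathrm{kl}(p,q)\ge p\log(1/q)-\log 2$ and $p=\E_BZ\ge1/2$, this yields the same $\E_AZ\ge\frac14e^{-T\rho^2/\sigma^2}$. In either form, work with $Z$ directly rather than thresholding $S$ through Markov's inequality. Your remark that the terminal-violation version follows verbatim is correct.
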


\begin{proof}[Proof mechanism]
Write $r_A=\E_A\Reg_T(1)$ and $b_B=\E_B\WCV_T$.
The likelihood ratio $J=dP_B/dP_A$ of the transformed inputs, including learner
randomization, satisfies $\E_AJ^2=e^{T\rho^2/\sigma^2}$.
Set $Z=T^{-1}\sum_t(1-x_t)\in[0,1]$. Then $\E_A Z=r_A/T$, whereas
$\E_B Z=1-b_B/(\rho T)$ because $\WCV_T=\rho\sum_t x_t$ in $B$.
Changing measure gives $\E_B Z=\E_A[JZ]$; Cauchy--Schwarz and
$\E_A Z^2\le\E_A Z$ yield
\begin{equation}
\label{eq:budget-testing}
1-\frac{b_B}{\rho T}=\E_A[JZ]
\le e^{T\rho^2/(2\sigma^2)}\sqrt{r_A/T}.
\end{equation}
Set $\rho=2\mathcal B_T/T$. The budget guarantee makes the left side at least
$1/2$; squaring and rearranging proves \eqref{eq:lower-wcv-log}.
Appendix~\ref{app:lower-budget-log} provides the likelihood calculation and the cases
needed for the following corollary.
\end{proof}

\begin{corollary}[Matching joint order and the square-root endpoint]
\label{cor:budget-joint}
For each $T\ge2$ and $\sigma>0$, a pair of instances of the form
\eqref{eq:budget-pair} satisfies, for every learner,
\begin{equation}
\label{eq:joint-budget-lower}
\max\{\E_A\Reg_T(1),\E_B\WCV_T\}
\ge\frac{\min\{\sigma,1\}}4\sqrt{T\log T}.
\end{equation}
Under a uniform $R_T=O(\sqrt T)$ guarantee, Proposition~\ref{prop:lower-wcv-log} requires
$\mathcal B_T=\Omega(\sigma\sqrt{T\log T})$ for fixed $\sigma>0$.
Conversely, a uniform bound $\mathcal B_T\le C_B\sqrt T$, with fixed $C_B>0$,
forces $R_T\ge\frac14e^{-4C_B^2/\sigma^2}T$ for all sufficiently large $T$.
\end{corollary}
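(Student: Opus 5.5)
The plan is to reuse the testing inequality \eqref{eq:budget-testing} from the proof of Proposition~\ref{prop:lower-wcv-log}, now with an explicit choice of $\rho$ rather than one tied to a given budget bound. Three claims need proof: the joint bound \eqref{eq:joint-budget-lower} and the two uniform-guarantee consequences.

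\textbf{Joint bound.} Set $M=\max\{r_A,b_B\}$ with $r_A=\E_A\Reg_T(1)$ and $b_B=\E_B\WCV_T$, and choose $\rho=\min\{\sigma,1\}\sqrt{\log T/T}$. The requirement $\rho\le1$ holds, since $\log T\le T$. Then $T\rho^2/(2\sigma^2)\le\frac12\log T$, so \eqref{eq:budget-testing} gives
\[
1-\frac{M}{\rho T}\le\sqrt{T}\sqrt{M/T}=\sqrt M .
\]
Write the target as $M_0=\frac{\min\{\sigma,1\}}4\sqrt{T\log T}$, so that $\rho T=4M_0$. Suppose for contradiction that $M<M_0$. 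The left side is then at least $1-\tfrac14=\tfrac34$, which forces $M\ge 9/16$.

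The difficulty is that $\sqrt M$ carries no factor of $\sqrt T$, so this alone does not contradict $M<M_0$ when $M_0$ is large. The \textbf{main obstacle} is calibrating $\rho$ so that the exponential penalty $e^{T\rho^2/(2\sigma^2)}$ cancels exactly against $\sqrt{r_A/T}$. With $\rho=c\sigma\sqrt{\log T/T}$, the right side of \eqref{eq:budget-testing} is $T^{c^2/2}\sqrt{M/T}$. This is below $\tfrac12$ whenever $M<\tfrac14T^{1-c^2}$. For $c\le1$ and $T\ge2$ one checks $\tfrac14T^{1-c^2}\ge\tfrac{c}{4}\sqrt{T\log T}$, at least for suitable $c$ such as $c=1/\sqrt2$. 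This gives a factor $T^{1/2}/4$ against $\tfrac{c}{4}\sqrt{T\log T}$, which fails for large $T$.

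So I would instead use the sharper form of the Cauchy--Schwarz step, $\E_A[JZ]\le\E_A Z+\sqrt{\chi^2}\sqrt{\E_A Z}$, with $\chi^2=e^{T\rho^2/\sigma^2}-1$. Alternatively I would split into two cases. If $\sigma\ge1$, take $\rho$ near $1$ only when needed. Otherwise take $\rho=\sigma\sqrt{\log T/(2T)}$, giving $\chi\le T^{1/4}$ and a bound of $\sqrt{M}\,T^{-1/4}$. The last term is $<\tfrac12$ unless $M\ge\tfrac14\sqrt T$. I would verify that $\tfrac14\sqrt T$ suffices after absorbing $\sqrt{\log T}$ into the $b_B$ side, with the budget term $b_B\ge\rho T/2=\tfrac{\sigma}{2\sqrt2}\sqrt{T\log T}$ handling the other branch. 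If this is not enough, I would fall back on the appendix's case analysis.

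\textbf{Consequences.} Substitute $R_T=C\sqrt T$ into \eqref{eq:budget-inverse}. This is valid once $C\sqrt T<T/4$, and gives $\mathcal B_T\ge\frac\sigma2\sqrt{T\log(\sqrt T/(4C))}=\Omega(\sigma\sqrt{T\log T})$. For the converse, $\mathcal B_T\le C_B\sqrt T\le T/2$ holds for $T\ge4C_B^2$, and \eqref{eq:lower-wcv-log} gives
\[
R_T\ge\tfrac T4\exp(-4C_B^2T/(\sigma^2T))=\tfrac14e^{-4C_B^2/\sigma^2}T.
\]
Both steps are direct substitutions.
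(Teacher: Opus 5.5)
\textbf{Gap: the joint bound is not proved.} Your treatment of the two uniform-guarantee consequences matches the paper's: direct substitution into \eqref{eq:budget-inverse} and \eqref{eq:lower-wcv-log}. The joint bound \eqref{eq:joint-budget-lower}, however, is never established.

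Your calibration with $\rho=c\sigma\sqrt{\log T/T}$ is the right framework. It yields $\max\{r_A,b_B\}\ge\min\{\tfrac14T^{1-c^2},\rho T/2\}$. But you tried only $c=1/\sqrt2$, which is the borderline exponent. There the regret threshold $\tfrac14\sqrt T$ falls below the budget threshold $\rho T/2\asymp\sqrt{T\log T}$. (Your text also asserts the needed inequality for $c=1/\sqrt2$ and then says it fails.)

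The fallbacks do not repair this:
\begin{itemize}
\item The choice $\rho=\sigma\sqrt{\log T/(2T)}$ is again $c=1/\sqrt2$.
\item The refined bound $\E_A Z+\chi\sqrt{\E_A Z}$ has the same order, because $\chi\le T^{c^2/2}$.
\end{itemize}
Both give only $\max\{r_A,b_B\}=\Omega(\sqrt T)$. The missing $\sqrt{\log T}$ cannot be ``absorbed into the $b_B$ side'', because the conclusion is the minimum of the two branch thresholds. Deferring to ``the appendix's case analysis'' is not an argument.

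\textbf{The fix.} Take $c$ strictly below $1/\sqrt2$; $c=1/2$ works for every $T\ge2$ with the stated constant. Set $a_\sigma=\min\{\sigma,1\}$; using $a_\sigma$ rather than $\sigma$ keeps $\rho\le1$ when $\sigma$ is large. Put $\rho=\frac{a_\sigma}2\sqrt{\log T/T}$ and $h=\rho T/2=\frac{a_\sigma}4\sqrt{T\log T}$. The penalty is then $e^{T\rho^2/(2\sigma^2)}\le T^{1/8}$. So if $r_A<h$, \eqref{eq:budget-testing} gives
\[
1-\frac{b_B}{\rho T}\le T^{1/8}\sqrt{h/T}=\frac{\sqrt{a_\sigma}}2\,\frac{(\log T)^{1/4}}{T^{1/8}}\le\frac12,
\]
using $\log T\le\sqrt T$. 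Hence $b_B\ge h$. In your notation, the regret threshold becomes $\tfrac14T^{3/4}\ge\tfrac14\sqrt{T\log T}$, which now dominates the budget threshold.

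\textbf{Minor omission in the consequences.} \eqref{eq:budget-inverse} presupposes $0<\mathcal B_T\le T/2$, so the remaining cases must be handled for the $\Omega(\sigma\sqrt{T\log T})$ claim.
\begin{itemize}
\item $\mathcal B_T>T/2$ is trivially of larger order.
\item $\mathcal B_T=0$ is impossible under square-root regret. Every $\epsilon\in(0,T/2]$ is then also a valid uniform bound, so \eqref{eq:lower-wcv-log} with $\epsilon\downarrow0$ forces $R_T\ge T/4$. The paper reaches the same conclusion via absolute continuity of the Gaussian input laws.
\end{itemize}
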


These are uniform worst-case statements: $\rho$ may depend on $T$, but the declared
bounds remain fixed. Since all three violation metrics coincide in $B$, the
logarithm is not caused by maximizing over windows. Theorem~\ref{thm:exp} matches
this horizon dependence for the stronger window metric. Additional tradeoff and
oracle consequences appear in Appendix~\ref{app:lower-consequences}.

\section{Achievable guarantees with signed balances}
\label{sec:ledger}

\LEDGER\ achieves the optimal-order budget guarantee under square-root regret
and finite-variance feedback. Algorithm~\ref{alg:ledger-exp} specifies the complete
update, including all parameter choices, from the horizon and known problem bounds.

\subsection{A reflected exponential update}
\label{sec:exp-alg}

An unbiased value estimate need not remain unbiased after taking a positive part:
at $v_t=0$ with Gaussian noise, $\E\pos{\tilde v_t}=\sigma/\sqrt{2\pi}$.
A constraint weight chosen using the current noisy value can also correlate with the
current direction error. We instead accumulate signed values and choose all weights and
step sizes before current feedback. Set
\begin{equation}
\label{eq:exp-bounds}
L=D\Gh,\qquad B_v=(L^2+\sigma^2)^{1/2}\ge L,\qquad
\Phi_\alpha(q)=e^{\alpha q}-\alpha q-1\quad(q\ge0).
\end{equation}
Here $L$ bounds the absolute one-round loss difference. Crucially,
$\Phi'_\alpha(0)=0$: reflection at zero has a quadratic, rather than first-order,
potential cost. Clipping the value channel bounds positive queue increments without
requiring exponential moments of the raw observations.

\begin{algorithm}[H]
\small
\DontPrintSemicolon
\KwIn{Horizon $T$, decision set $\Kset$, and known bounds $D,\Gh,\sigma$.}
$L\leftarrow D\Gh$;
$B_v\leftarrow\sqrt{L^2+\sigma^2}$;
$\alpha\leftarrow\sqrt{\log(eT)}/(64B_v\sqrt T)$\;
$Q_0\leftarrow0$, $H_0\leftarrow\alpha^2$; choose $x_1\in\Kset$\;
\For{$t=1,\dots,T$}{
  $w_t\leftarrow\alpha(e^{\alpha Q_{t-1}}-1)$;
  $H_t\leftarrow H_{t-1}+w_t^2$\;
  $\eta_t\leftarrow D/(2\Gh\sqrt{t+H_t})$;
  $r_t\leftarrow16\alpha B_v^2+4Lw_t/(\sqrt{H_t}+\sqrt{H_{t-1}})$\;
  play $x_t$; observe $(\tilde p_t,\tilde q_t,\tilde v_t)$\;
  $\bar v_t\leftarrow\max\{-1/\alpha,\min\{\tilde v_t,1/\alpha\}\}$\;
  $x_{t+1}\leftarrow\Pi_\Kset\big(x_t-\eta_t(\tilde p_t+w_t\tilde q_t)\big)$\;
  $Q_t\leftarrow\pos{Q_{t-1}+\bar v_t-r_t}$\;
}
\caption{\LEDGER: predictably regularized signed balance}
\label{alg:ledger-exp}
\end{algorithm}

The algorithm clips only the value channel; the directions remain unclipped.
Its prescribed $\alpha$ satisfies $0<\alpha\le(64B_v)^{-1}$ because $\log(eT)\le T$.
All weights, step sizes, and regularizers are fixed before current feedback.
Here $Q_t$ tracks regularized overspend, $w_t=\Phi_\alpha'(Q_{t-1})$ converts that
balance into a constraint penalty, and $H_t$ accumulates squared penalties to scale
the primal step. The subtraction of $r_t$ stabilizes the queue; its total is charged
back when bounding true overspend in \eqref{eq:proof-transfer}.
Each round uses one noisy feedback triple and one
projection onto the known set $\Kset$, not a current feasible set. The adaptive part of
$r_t$ follows the regularization mechanism of \citet{yu2026withoutslater}; exponential
potentials in exact-feedback COCO were used by \citet{sinha2024optimal}.
Here reflection with zero boundary derivative, value clipping, and predictable primal
step sizes make the method compatible with dependent finite-variance feedback.

\paragraph{Why the regularizer has two parts.}
The constant $16\alpha B_v^2$ pays for the clipping bias and the
weight-dependent curvature cost in the potential drift. It cannot by itself cancel
the adaptive primal cost, which grows with $\sqrt{H_t}$. The second part is chosen
so that multiplication by $w_t/2$ gives exactly
$2L(\sqrt{H_t}-\sqrt{H_{t-1}})$; this is the cancellation in
\eqref{eq:proof-cancel}. At $w_t=0$ it vanishes, and $H_0=\alpha^2>0$ keeps
the denominator nonzero. Neither term is a relaxation of the actual constraint:
the full sum of $r_t$ is charged back in the true-window certificate. Thus the
proof must control the regularizer twice, once weighted for regret and once
unweighted for budget violation.

\subsection{Square-root regret and optimal-order budget control}
\label{sec:exp-result}

\begin{theorem}[\LEDGER: finite-variance regret and budget guarantee]
\label{thm:exp}
Assume conditional unbiasedness, \eqref{eq:finite-variance}, and $\Ucal\ne\emptyset$.
Set $L,B_v$ as in Algorithm~\ref{alg:ledger-exp}; the inequalities below hold for any fixed $0<\alpha\le(64B_v)^{-1}$. For every fixed
$u\in\Ucal$ and every $n\le T$,
\begin{equation}
\label{eq:exp-master}
\E\Reg_n(u)+\E\Phi_\alpha(Q_n)+8\alpha B_v^2\E\sum_{t=1}^n w_t
\le2L\sqrt n+2L\alpha+60\alpha^2B_v^2n.
\end{equation}
Let $K_0=4+6L$ and $\Lambda_T=\log((T+1)K_0T)$. Then
\begin{equation}
\label{eq:exp-wcv}
\E\WCV_T\le\frac{\Lambda_T}{\alpha}+17\alpha B_v^2T
+4L\sqrt{2T\Lambda_T}+4\sigma\sqrt T.
\end{equation}
For the algorithm's prescribed parameters,
\begin{equation}
\label{eq:exp-rates}
\E\Reg_T(u)\le2L\sqrt T+\frac1{32}+\frac{15}{1024}\log(eT)=O(\sqrt T),
\qquad \E\WCV_T=O\!\left(\sqrt{T\log(eT)}\right).
\end{equation}
The constants depend only on $(\Gh,D,\sigma)$, not on an absolute level bound. The bounds allow an adaptive
adversary and dependent feedback channels, without a Slater condition.
\end{theorem}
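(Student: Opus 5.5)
The proof has three parts: a joint Lyapunov drift inequality for $\Phi_\alpha(Q_t)$ plus the primal online-gradient terms, giving \eqref{eq:exp-master}; a high-probability-in-expectation transfer from the regularized queue to true window sums, giving \eqref{eq:exp-wcv}; and substitution of the prescribed $\alpha$ for \eqref{eq:exp-rates}.

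\textbf{Primal step.} I start from the projected step $x_{t+1}=\Pi_\Kset(x_t-\eta_t g_t)$ with $g_t=\tilde p_t+w_t\tilde q_t$. The standard expansion gives
\[
\langle g_t,x_t-u\rangle\le\frac{\|x_t-u\|^2-\|x_{t+1}-u\|^2}{2\eta_t}+\frac{\eta_t}{2}\|g_t\|^2 .
\]
Since $\eta_t$ and $w_t$ are $\mathcal G_{t-1}$-measurable, the conditional expectation of the left side is $\langle p_t+w_tq_t,x_t-u\rangle$. Convexity then lower-bounds this by $\ell_t(x_t)-\ell_t(u)+w_t\langle q_t,x_t-u\rangle$. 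For the noise term I use $\E\|g_t\|^2\le 2\Gh^2(1+w_t^2)$. Telescoping with nonincreasing $\eta_t$ and diameter $D$, the choice $\eta_t=D/(2\Gh\sqrt{t+H_t})$ makes $\sum_t\eta_t\Gh^2(1+w_t^2)$ bounded by $D\Gh\sqrt{n+H_n}$ via $\sum_t a_t/\sqrt{\sum_{s\le t}a_s}\le2\sqrt{\sum_t a_t}$. Overall the primal cost is about $2L\sqrt{n}+2L\sqrt{H_n}$. I then split $\sqrt{H_n}=\sqrt{H_0}+\sum_t(\sqrt{H_t}-\sqrt{H_{t-1}})$; the $\sqrt{H_0}=\alpha$ piece produces the $2L\alpha$ term.

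\textbf{Dual step.} Convexity of $\Phi_\alpha$ on $[0,\infty)$ together with $\Phi_\alpha'(0)=0$ makes the reflection harmless, so
\[
\Phi_\alpha(Q_t)-\Phi_\alpha(Q_{t-1})\le w_t(\bar v_t-r_t)+\tfrac12\sup\Phi''_\alpha\cdot(\bar v_t-r_t)^2 .
\]
Because $|\bar v_t|,r_t\lesssim1/\alpha$ (after checking that $r_t$ is small), the second derivative on the step range is at most $e^{2}\alpha^2e^{\alpha Q_{t-1}}=O(\alpha(w_t+\alpha))$. The second moment is bounded by $\E[\bar v_t^2]\le B_v^2$, since $\E\tilde v_t^2\le v_t^2+\sigma^2$ and $v_t\le L$. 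This is the one place a lower bound on $v_t$ is needed, and I obtain it by monotonicity: I replace $\bar v_t$ by the clip of $\tilde v_t$ computed with $v_t^\circ$ via Lemma~\ref{lem:anchor}. The clipping bias $\E[\bar v_t-\tilde v_t]$ is bounded by $\alpha\E\tilde v_t^2\le\alpha B_v^2$ via Chebyshev-type truncation, so $\E[w_t\bar v_t]\le w_t v_t^\circ+\alpha B_v^2w_t$. Lemma~\ref{lem:anchor} gives $w_tv_t^\circ\le w_t\langle q_t,x_t-u\rangle$, and this cancels the primal constraint term. The term $-w_tr_t$ yields $-16\alpha B_v^2w_t$ and $-2L(\sqrt{H_t}-\sqrt{H_{t-1}})$; the second cancels the adaptive primal cost by the stated design. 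The first dominates the $O(\alpha B_v^2 w_t)$ bias and curvature costs, leaving $8\alpha B_v^2 w_t$ on the left. The residual $O(\alpha^2B_v^2)$ per round sums to $60\alpha^2B_v^2 n$. Summing and taking expectations gives \eqref{eq:exp-master}.

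\textbf{Budget transfer.} For any window $[s,e]$, the reflection gives
\[
\sum_{t=s}^e \bar v_t\le Q_e+\sum_{t=s}^e r_t ,
\]
so
\[
\sum_{t=s}^e v_t\le Q_e+\sum_t r_t+\Big|\sum_{t=s}^e (v_t-\bar v_t)\Big| .
\]
I bound $\E\max_e Q_e$ through $\Phi_\alpha$. Since $\Phi_\alpha(Q)\ge e^{\alpha Q}/K$, a maximal or union bound over $T$ times gives $\E\max Q\le\Lambda_T/\alpha+\dots$, which needs a uniform-in-$n$ bound on $\E\Phi_\alpha(Q_n)$ from \eqref{eq:exp-master}; this uses $\Reg_n\ge-Ln$ to absorb negative regret. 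The sum $\sum r_t$ has a constant part of $16\alpha B_v^2T$. Its adaptive part is at most $4L\sum_t w_t/\sqrt{H_t}\le 4L\sqrt{2T\log(H_T/H_0)}$ by Cauchy--Schwarz, with the log bounded using $\E\sum w_t$ from \eqref{eq:exp-master}; this produces $4L\sqrt{2T\Lambda_T}$. The noise term max over windows is at most twice the max of partial sums of a square-integrable martingale plus the clipping bias. Doob's $L^2$ inequality gives $\le4\sigma\sqrt T$, with the bias absorbed into the $\alpha B_v^2T$ term.

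\textbf{Main obstacle and rates.} I expect the hardest step to be the drift bound with unbounded-below $v_t$ and a dependent direction error. The adaptive regret cancellation is delicate as well: the $\sqrt{H}$ increments must match exactly while $\E\|g_t\|^2$ involves $w_t^2$ correlated with $Q$. Finally, plugging in $\alpha=\sqrt{\log(eT)}/(64B_v\sqrt T)$ gives $2L\alpha\le1/32$ and $60\alpha^2B_v^2T=\tfrac{60}{4096}\log(eT)\le\tfrac{15}{1024}\log(eT)$, and it balances $\Lambda_T/\alpha$ against $\alpha B_v^2T$ at $O(\sqrt{T\log(eT)})$.
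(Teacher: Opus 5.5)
Your architecture is the paper's: predictable variable-step OGD, the extension $F(q)=\Phi_\alpha([q]_+)$ with monotone domination by $\bar v_t^\circ$, weighted cancellation, exponential moments for the running maximum, and a pathwise window certificate. The transfer to true windows, however, fails as written.

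You bound $\sum_{t=s}^e v_t$ by $Q_e+\sum_t r_t+\left|\sum_{t=s}^e(v_t-\bar v_t)\right|$ and charge the clipping part $\alpha B_v^2$ per round. But $v_t-\bar v_t=(\tilde v_t-\bar v_t)-\varepsilon_t$ involves the clipping residual of the \emph{actual} observation. The absolute value then forces you to pay for clipping at $-1/\alpha$ as well, and $\E[|\tilde v_t-\bar v_t|\mid\Fs_t]=\E[(|\tilde v_t|-1/\alpha)_+\mid\Fs_t]$ is of order $|v_t|$ when $v_t\ll-1/\alpha$. For example, with $\sigma=0$ and $v_t\equiv-M$, the term you claim is $O(\alpha B_v^2T)$ equals $(M-1/\alpha)T$, while $\WCV_T=0$.

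The estimate $\alpha B_v^2$ holds only for $\tilde v_t^\circ$. You cannot pass to $\bar v_t^\circ$ at this point, because $Q_t\ge Q_{t-1}+\bar v_t-r_t$ holds only for the actual clipped value. Your version therefore needs an absolute level bound, which the theorem explicitly avoids. The missing point is to keep signs: negative clipping residuals only lower true window sums. This gives
\[
\WCV_T\le\max_{0\le n\le T}Q_n+\sum_{t=1}^T r_t+\sum_{t=1}^T[\tilde v_t-\bar v_t]_+
+2\max_{0\le n\le T}\left|\sum_{t=1}^n\varepsilon_t\right|,
\]
and the positive residual satisfies $(\tilde v_t-1/\alpha)_+\le(\tilde v_t^\circ-1/\alpha)_+\le\alpha(\tilde v_t^\circ)^2$, whose conditional expectation is at most $\alpha B_v^2$.

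Your drift bookkeeping also does not yield the stated constants.
\begin{itemize}
\item \emph{The split of $w_tr_t$ is miscounted.} Since $w_t^2=H_t-H_{t-1}$, one has $w_tr_t=16\alpha B_v^2w_t+4L(\sqrt{H_t}-\sqrt{H_{t-1}})$, not $2L(\cdot)$. The second half of the adaptive term is needed. The curvature cost contains $3\alpha w_tr_t^2$, and the adaptive part of $r_t$ can be a constant fraction of $4L$. So when $\sigma\ll L$ this term is a large multiple of $\alpha B_v^2w_t$, and it cannot be paid from $16\alpha B_v^2w_t$ while still leaving $8\alpha B_v^2w_t$. The paper pays it with a quarter of the full $w_tr_t$, using $3\alpha r_t\le51/256<1/4$.
\item \emph{The curvature factor $e^2$ is too weak.} The paper's accounting requires the one-sided bound $\alpha(\bar v_t^\circ-r_t)\le1$, which follows from clipping and $r_t\ge0$ and gives the Taylor factor $e/2<3/2$. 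With your factor $e^2$, your bounds on the terms linear in $w_t$ total $(1+e^2)\alpha B_v^2w_t$. That already exceeds the $8\alpha B_v^2w_t$ available when the adaptive part of $r_t$ is small. The per-round remainder also grows to about $141\alpha^2B_v^2$. Hence neither $8\alpha B_v^2\E\sum_tw_t$ nor $60\alpha^2B_v^2n$ follows.
\item \emph{The regularization cost picks up a $\log(1/\alpha^2)$ term.} Bounding $\E\log(H_T/\alpha^2)$ through $\E\sum_tw_t$ from the master inequality leaves that term. It therefore does not give $4L\sqrt{2T\Lambda_T}$ for arbitrary $\alpha\le(64B_v)^{-1}$. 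Use instead $w_t/\alpha\le e^{\alpha Q_{t-1}}$ and $\E e^{\alpha Q_{t-1}}\le K_0T$. The latter follows from $e^{\alpha q}\le2(1+\Phi_\alpha(q))$; your inequality $\Phi_\alpha(Q)\ge e^{\alpha Q}/K$ fails at $Q=0$.
\end{itemize}
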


The regret rate has no multiplicative logarithm. Together with
Proposition~\ref{prop:lower-wcv-log}, the budget rate has optimal $T$-dependence,
including the logarithm, among algorithms with uniform $O(\sqrt T)$ regret for fixed
positive Gaussian value noise and fixed problem bounds.

\subsection{Proof design: what changes under noisy values}
\label{sec:proof-design}

\paragraph{Technical novelty.}
The technical contribution is a coupling that turns finite-variance signed feedback
into an exponential certificate for true window overspend while preserving
$O(\sqrt T)$ regret. Two features make this possible.
First, subtracting the tangent at zero from the exponential makes reflection
compatible with conditional centering: the linear drift is the predictable
comparison term $w_tv_t^\circ$, while boundary crossings enter only through a
quadratic remainder. Clipping at $1/\alpha$ converts that remainder and the clipping
bias into terms of order $\alpha B_v^2w_t+\alpha^2B_v^2$.
The constant part of $r_t$ absorbs the weight-dependent term, leaving an
$O(\alpha^2B_v^2T)$ contribution to regret, which is only logarithmic at our choice
of $\alpha$.

Second, the adaptive part of $r_t$ links primal cancellation to the cost of
regularizing the balance. Multiplication by $w_t$ turns it into an exact increment
of $\sqrt{H_t}$; its unweighted sum is controlled by
$\sqrt{T\log(H_T/\alpha^2)}$. The same exponential-moment estimate controls both
this logarithmic cost and the running queue maximum. This closes the argument:
the regularization that makes the regret proof work also costs only
$O(\sqrt{T\log(eT)})$ in the true budget certificate.
Both weighted cancellation and the logarithmic-accumulator argument adapt
\citet[Alg.~1 and Lemmas~10--11]{yu2026withoutslater}. Our contribution is to
make them compatible with noisy signed reflection and to transfer the resulting
control to true windows. Appendix~\ref{app:exp} gives the complete proof.

\paragraph{Primal comparison and negative levels.}
Lemma~\ref{lem:anchor} gives $v_t^\circ=\max\{v_t,-L\}\le\ip{q_t}{x_t-u}$.
For analysis, set $\tilde v_t^\circ=v_t^\circ+\varepsilon_t$ and
$\bar v_t^\circ=\clip(\tilde v_t^\circ,-1/\alpha,1/\alpha)$.
Retaining the same error gives $\bar v_t\le\bar v_t^\circ$ and
$\E[(\tilde v_t^\circ)^2\mid\Fs_t]\le B_v^2$; the learner still uses the actual
observation. Raising levels below $-L$ to $-L$ preserves the primal comparison
because $\ip{q_t}{x_t-u}\ge-L$, and monotonicity makes the reflected drift larger.
Thus the known geometry, direction bound, and noise variance control the second
moment needed in the potential proof.
For $d_t=\tilde p_t+w_t\tilde q_t$, predictability gives
$\E[d_t\mid\Fs_t]=p_t+w_tq_t$ and
$\E[\|d_t\|^2\mid\Fs_t]\le2\Gh^2(1+w_t^2)$.
The latter bound follows from the squared triangle inequality even when the
channels are dependent. To see why the primal update has the required adaptive
cost, put $S_t=t+H_t$, so $S_t-S_{t-1}=1+w_t^2$. The nonincreasing step sizes
and projection onto $\Kset$ give the pathwise inequality
\begin{equation}
\label{eq:proof-primal-projection}
\sum_{t=1}^n\langle d_t,x_t-u\rangle
\le\frac{D^2}{2\eta_n}+\frac12\sum_{t=1}^n\eta_t\|d_t\|^2.
\end{equation}
Predictability permits conditioning before multiplying the direction moment bound
by $\eta_t$. Since
$\sum_{t\le n}(S_t-S_{t-1})/\sqrt{S_t}\le2(\sqrt{S_n}-\alpha)$,
the expected right side is at most $2L\E\sqrt{S_n}$. Convexity and conditional
unbiasedness bound the expected left side below by
$\E\Reg_n(u)+\E\sum_{t\le n}w_tv_t^\circ$.
Using $\sqrt{n+H_n}\le\sqrt n+\sqrt{H_n}$ therefore yields
\begin{equation}
\label{eq:proof-primal}
\E\Reg_n(u)+\E\sum_{t\le n}w_tv_t^\circ
\le2L\sqrt n+2L\E\sqrt{H_n}.
\end{equation}

\paragraph{Reflection, clipping, and cancellation.}
Extend the potential by $F(q)=\Phi_\alpha([q]_+)$ on $\R$.
Because $\Phi_\alpha'(0)=0$, this extension is continuously differentiable at zero.
Its almost-everywhere second derivative is zero for $q<0$ and
$\alpha^2e^{\alpha q}$ for $q>0$. Thus, for $q\ge0$ and $\alpha z\le1$,
Taylor's integral remainder gives
\begin{equation}
\label{eq:proof-taylor}
F(q+z)-F(q)
\le\alpha(e^{\alpha q}-1)z+\tfrac32\alpha^2e^{\alpha q}z^2.
\end{equation}
Indeed, the second derivative along the segment is at most
$e\alpha^2e^{\alpha q}$, and $e/2<3/2$. This also covers steps crossing zero:
reflection contributes through the quadratic remainder.

Monotonicity of $F$ allows us to use $q=Q_{t-1}$ and
$z=\bar v_t^\circ-r_t$ in \eqref{eq:proof-taylor} to bound the actual drift.
Clipping and $r_t\ge0$ ensure $\alpha z\le1$.
Moreover, $|\E[\bar v_t^\circ\mid\Fs_t]-v_t^\circ|\le\alpha B_v^2$ and
$\E[(\bar v_t^\circ)^2\mid\Fs_t]\le B_v^2$.
Using $\alpha^2e^{\alpha Q_{t-1}}=\alpha(w_t+\alpha)$ gives
\begin{equation}
\label{eq:proof-drift}
\begin{aligned}
\E[\Phi_\alpha(Q_t)-\Phi_\alpha(Q_{t-1})\mid\Fs_t]
&\le w_tv_t^\circ-w_tr_t+4\alpha B_v^2w_t
   +3\alpha w_tr_t^2+3\alpha^2(B_v^2+r_t^2)\\
&\le w_tv_t^\circ-\tfrac12w_tr_t+60\alpha^2B_v^2.
\end{aligned}
\end{equation}
The second line uses $4\alpha B_v^2\le r_t/4$,
$3\alpha r_t\le51/256<1/4$, and $r_t\le17B_v/4$.
Thus half of the regularization absorbs clipping bias and the weight-dependent
remainder. The remaining half obeys the exact identity
\begin{equation}
\label{eq:proof-cancel}
\tfrac12w_tr_t=8\alpha B_v^2w_t+2L(\sqrt{H_t}-\sqrt{H_{t-1}}).
\end{equation}
Summing \eqref{eq:proof-drift} and combining with \eqref{eq:proof-primal}
cancels $2L\E\sqrt{H_n}$, leaving the initial term $2L\sqrt{H_0}=2L\alpha$
and proving \eqref{eq:exp-master}.

\paragraph{True maximum-window violation.}
The lower bound $\Reg_n(u)\ge-Ln$ and \eqref{eq:exp-master} imply
$\E e^{\alpha Q_n}\le K_0T$. For $M_T=\max_{0\le n\le T}Q_n$,
the pathwise inequality $e^{\alpha M_T}\le\sum_{n=0}^T e^{\alpha Q_n}$
and Jensen's inequality give $\E M_T\le\Lambda_T/\alpha$.
For every window $[s,e]$, the reflected update gives
$\sum_{t=s}^e\bar v_t\le Q_e-Q_{s-1}+\sum_{t=s}^e r_t$.
Substituting $v_t=\bar v_t+(\tilde v_t-\bar v_t)-\varepsilon_t$ yields the
simultaneous pathwise certificate
\begin{equation}
\label{eq:proof-transfer}
\WCV_T\le M_T+\sum_{t=1}^T r_t+
\sum_{t=1}^T[\tilde v_t-\bar v_t]_+
+2\max_{0\le n\le T}\left|\sum_{t=1}^n\varepsilon_t\right|.
\end{equation}
Negative clipping residuals reduce overspend and need not be charged.
The positive residual is bounded by
$[\tilde v_t-\bar v_t]_+\le\alpha(\tilde v_t^\circ)^2$,
so its expected total is at most $\alpha B_v^2T$.
Doob's inequality bounds the noise term by $4\sigma\sqrt T$.

\paragraph{Paying for the regularizer.}
The adaptive part of $r_t$ is controlled by the logarithmic growth of $H_t$:
\begin{equation}
\label{eq:proof-reg-cost}
\sum_{t=1}^T\frac{w_t}{\sqrt{H_t}+\sqrt{H_{t-1}}}
\le\sqrt{T\sum_{t=1}^T\frac{H_t-H_{t-1}}{H_t}}
\le\sqrt{T\log\frac{H_T}{\alpha^2}}.
\end{equation}
The first step uses Cauchy--Schwarz and $H_t-H_{t-1}=w_t^2$; the second uses
$(b-a)/b\le\log(b/a)$. Since
$H_T/\alpha^2=1+\sum_{t=1}^T(w_t/\alpha)^2
\le(1+\sum_{t=1}^T w_t/\alpha)^2$,
the exponential-moment bound and Jensen's inequality give
$\E\log(H_T/\alpha^2)\le2\Lambda_T$.
Consequently,
$\E\sum_{t=1}^T r_t\le16\alpha B_v^2T+4L\sqrt{2T\Lambda_T}$.
Combining these costs in \eqref{eq:proof-transfer} proves \eqref{eq:exp-wcv}.
\paragraph{Error accounting and the logarithm.}
The certificate separates three costs besides the running balance: clipping
contributes at most $\alpha B_v^2T$, regularization contributes
$16\alpha B_v^2T+4L\sqrt{2T\Lambda_T}$, and the value-noise martingale
contributes $4\sigma\sqrt T$. Thus the raw noise partial sums need no union
bound over windows. Choosing $\alpha$ of order
$\sqrt{\log(eT)}/(B_v\sqrt T)$ balances the queue cost
$\Lambda_T/\alpha$ with the terms linear in $\alpha T$; the master inequality
then charges only $O(\log(eT))$ additional regret. Although the queue-maximum
argument also introduces a logarithm, eliminating that step alone cannot improve
the worst-case budget rate: Proposition~\ref{prop:lower-wcv-log} already forces
the same logarithmic order for terminal violation.

\subsection{Removing the horizon input}
\label{sec:anytime}
\begin{corollary}[Unknown horizon]
\label{cor:anytime}
Run Algorithm~\ref{alg:ledger-exp} in consecutive epochs of planned lengths
$1,2,4,\ldots$, restarting its state and using each epoch length in place of $T$.
For every deterministic horizon $T$ and every fixed comparator feasible on its first
$T$ rounds, the resulting algorithm satisfies
$\E\Reg_T(u)=O(\sqrt T)$ and
$\E\WCV_T=O(\sqrt{T\log(eT)})$ under the same assumptions.
\end{corollary}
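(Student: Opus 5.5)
We will prove Corollary~\ref{cor:anytime} by a standard doubling argument, applying Theorem~\ref{thm:exp} separately to each epoch. For a deterministic horizon $T$, let $k$ be the smallest integer with $2^{k+1}-1\ge T$. Epochs $j=0,\dots,k$ have planned lengths $T_j=2^j$ and start at rounds $\tau_j=2^j$. The last epoch may be truncated, so it runs for $n_k\le T_k$ rounds. Within epoch $j$ the restarted algorithm is exactly Algorithm~\ref{alg:ledger-exp} with horizon $T_j$, driven by the shifted sequence of losses, constraints and feedback. The shifted filtration still satisfies conditional unbiasedness and \eqref{eq:finite-variance}, because the epoch boundaries are deterministic and the restart uses only past information. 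A fixed $u$ feasible on the first $T$ rounds is feasible on every round of each epoch that lies within the first $T$ rounds. Theorem~\ref{thm:exp} is stated for comparators feasible through the epoch horizon. We therefore apply its master inequality \eqref{eq:exp-master} with $n=n_j$ rounds, noting that its proof only uses feasibility of $u$ on rounds $t\le n$ (Lemma~\ref{lem:anchor} is applied round by round). Alternatively, for the truncated epoch we can view the adversary as continuing with the trivially satisfied constraint $\gamma_t\equiv-1$ (any allowed function, say a constant $\le 0$) after round $T$; this does not affect the first $n_k$ rounds.

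\textbf{Regret.} Regret is additive over epochs. By \eqref{eq:exp-master}, with the epoch's prescribed $\alpha_j=\sqrt{\log(eT_j)}/(64B_v\sqrt{T_j})$, each epoch contributes $2L\sqrt{T_j}+O(\log(eT_j))$, as in \eqref{eq:exp-rates}. Summing, $\sum_{j\le k}2L\sqrt{2^j}=O(\sqrt{2^k})=O(\sqrt T)$, and $\sum_{j\le k}O(j+1)=O(\log^2 T)=O(\sqrt T)$.

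\textbf{Budget violation.} This is the only step that is not immediately additive, since a window $[s,e]$ may straddle several epochs. We use the pathwise subadditivity
\[
\WCV_T\le\sum_{j=0}^k\WCV^{(j)},
\]
where $\WCV^{(j)}$ is the maximum-window violation computed inside epoch $j$. To see this, split any window at the epoch boundaries; each piece is a window inside one epoch, so its sum is at most that epoch's $\WCV^{(j)}\ge0$. Taking expectations and applying \eqref{eq:exp-wcv}, we need a bound for the truncated epoch. We obtain it by noting that the certificate \eqref{eq:proof-transfer} and the bounds $\E M\le\Lambda/\alpha$ hold with the maximum over $n\le n_k\le T_k$. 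Alternatively, we use the extension trick above: since $\WCV$ over a prefix is at most $\WCV$ over the full epoch, the truncated epoch is also bounded by \eqref{eq:exp-wcv} with horizon $T_k$. Hence $\E\WCV^{(j)}=O(\sqrt{2^j\log(e2^j)})$. Summing the geometric series gives $O(\sqrt{2^k\log(e2^k)})=O(\sqrt{T\log(eT)})$, because $2^k\le T$.

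\textbf{Main obstacle.} The main difficulty is justifying the application of Theorem~\ref{thm:exp} to a truncated final epoch, with a comparator feasible only up to round $T$, under an adaptive adversary. We will handle this with the constant-feasible continuation, which keeps the process law on the first $T$ rounds unchanged and preserves all assumptions. The rest is bookkeeping with geometric sums.
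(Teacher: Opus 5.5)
Your proposal is correct and follows the paper's proof essentially step for step. You apply the prefix ($n\le h$) form of Theorem~\ref{thm:exp} in each restarted epoch, add regret across epochs, use the pathwise bound $\WCV_T\le\sum_j\WCV^{(j)}$ from splitting windows at epoch boundaries, and sum geometrically using $2^k\le T$. Your alternative continuation with trivially satisfied constraints after round $T$ is also valid for the unfinished epoch, though the paper simply uses the prefix bounds, which you note suffice as well.
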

Every window intersects each epoch in at most one window. Summing the geometric
epoch bounds, including the unfinished final epoch, preserves both rates
(Appendix~\ref{app:anytime}). This removes the horizon input.

\subsection{Consequences for the violation metric}
\label{sec:metric-consequences}

As a direct consequence of the preceding theorems, terminal and maximum-window
violation have the same joint minimax order. Fix $\sigma>0$ and let
$\mathfrak C_\sigma$ be the instances on
$[0,1]$ satisfying Section~\ref{sec:setting} with declared bounds
$D=G=\Gh=1$ and $|\gamma_t(x)|\le1$. For
$V_T\in\{\CCV_T,\WCV_T\}$, define
\[
\mathfrak M_T(V)=\inf_{\mathcal A}\sup_{I\in\mathfrak C_\sigma}
\sup_{u\in\Ucal(I)}
\max\{\E_I\Reg_T^{\mathcal A}(u),\E_I V_T^{\mathcal A}\},
\]
where the infimum is over learners with the stated local oracle access.
Then
\begin{equation}
\label{eq:metric-minimax}
\mathfrak M_T(\CCV)=\Theta_\sigma(\sqrt{T\log T}),\qquad
\mathfrak M_T(\WCV)=\Theta_\sigma(\sqrt{T\log T}).
\end{equation}
Indeed, $\CCV_T\le\WCV_T$ and Theorem~\ref{thm:exp} give both upper bounds.
Corollary~\ref{cor:budget-joint} gives both lower bounds because the two metrics
coincide in its violating instance. The subscript records that the comparison is
for fixed positive noise, not a claim of optimal dependence on $\sigma$.

This corollary is not an additional algorithmic result: it records that \LEDGER\
controls the stronger metric at the same joint minimax order. A small terminal
sum alone can still conceal a large intervening deficit for a particular learner.

\paragraph{Scope of the guarantee.}
The resource interpretation is temporary overspend, not per-round safety. The
bound is in expectation and scales with the full horizon; it does not assert a
separate square-root bound in each window's length. The comparator must be fixed
and almost surely feasible on every round, not a noise-dependent hindsight optimum.
Finally, the upper bound uses known moment envelopes and depends on
$B_v=\sqrt{L^2+\sigma^2}$, whereas the Gaussian budget obstruction scales with
$\sigma$. Matching the horizon exponent and logarithm therefore leaves open a
sharp interpolation to vanishing or zero value noise. The prescribed constants are
conservative; Appendix~\ref{app:finite-horizon-interpretation} explains why the
asymptotic guarantee is not a claim of validated finite-horizon performance.

\section{Conclusion}
\label{sec:conclusion}

Noisy values separate hard feasibility from budget control. A nearly linear lower bound
rules out jointly polynomially sublinear regret and hard violation. For maximum-window
budget violation, \LEDGER\ achieves $O(\sqrt T)$ expected regret and
$O(\sqrt{T\log(eT)})$ expected violation under conditional finite variance and common
feasibility, without a Slater margin or absolute level bound. Gaussian testing proves
the optimal budget horizon dependence, including its logarithm, even for terminal
violation. Thus changing from terminal accounting to maximum-window accounting does
not increase the joint minimax order at fixed positive noise, whereas charging each
positive violation creates a nearly linear obstruction. All bounds concern true
played values, and the window maximum is taken before expectation.

\clearpage
\bibliographystyle{abbrvnat}
\bibliography{refs}

\clearpage
\appendix
\section{Related work in full}
\label{app:related}

\paragraph{COCO with exact constraint observation.}
Online convex optimization with long-term constraints was introduced by
\citet{mahdavi2012trading}, who traded regret against the cost of projecting onto a fixed
constraint set and obtained $O(\sqrt T)$ regret with $O(T^{3/4})$ violation of the \emph{signed}
long-term constraint; refinements for fixed constraints appear in
\citet{jenatton2016adaptive,yuan2018online,yu2020low}. For time-varying constraints,
\citet{neely2017online} use Slater's condition. In contrast, the distributed
time-varying-constraint guarantees of \citet{yi2022regret} do not require Slater:
their tradeoff gives $O(\sqrt T)$ static network regret and $O(T^{3/4})$ network
cumulative violation at parameter $\kappa=1/2$. In the centralized adversarial
setting, \citet{guo2022online} obtain $O(\sqrt T)$ regret and $O(T^{3/4})$ hard
violation under common feasibility, also without Slater.
\citet{sinha2024optimal} gave the first near-minimax-optimal algorithm, $O(\sqrt T)$ regret and
$\Oti(\sqrt T)$ hard violation, by running a Lipschitz-adaptive online subroutine on surrogate
losses that combine the loss with the constraint through an exponential Lyapunov potential. Its
dependence on exact constraint values motivates the use of signed constraint observations proposed
here. Dimension- and instance-dependent refinements, optimistic algorithms, and projection-free
interfaces have also been studied
\citep{sinha2025beyond,vaze2025sqrtT,balasundaram2026breaking,lekeufack2024optimistic,lu2026bagel}.
Using nested feasible-set projections and a self-contraction argument,
\citet{sarkar2026improved} obtain the exact $O(\sqrt T)$ pair for general convex losses.
These methods use exact values or feasible-set information not available in our model.
For the positive-part queue constructions, Table~\ref{tab:diag} isolates why direct substitution
of a noisy value does not preserve the proof. Theorem~\ref{thm:lower-n} rules out their joint
polynomially sublinear regret--hard-violation rates under Gaussian value noise.

\begin{table}[t]
\centering\small
\begin{tabularx}{\linewidth}{@{}>{\raggedright\arraybackslash}p{2.6cm}
>{\raggedright\arraybackslash}X>{\raggedright\arraybackslash}p{3.5cm}@{}}
\toprule
\textbf{Exact-value step} & \textbf{Difficulty under noisy values} & \textbf{Construction used here}\\
\midrule
Positive-part increment & At $v_t=0$, Gaussian noise gives
$\E\pos{\tilde v_t}=\sigma/\sqrt{2\pi}$. & Signed increment, with a floor at zero.\\
Unreflected signed balance & Earlier negative values can hide a later positive window.
& Reflect the accumulated balance, rather than each observation.\\
Exponential drift with a nonzero boundary derivative & Reflection creates a first-order
noise cost, and raw finite-variance noise need not have exponential moments.
& Use $\Phi'_\alpha(0)=0$ and clip only the value channel.\\
Thresholded direction & A noisy gate is not the exact positive-part subgradient and can
be correlated with direction noise. & Raw direction and a predictable weight.\\
Comparator feasibility & Conditional mean feasibility need not survive a positive part;
direction noise also enters $A_t(u)$. & Compare through the affine constraint estimator.\\
Bounding earlier weights by the terminal value & A running balance need not be monotone, so its terminal value need
not dominate earlier weights. & Bound exponential moments and then the running maximum.\\
\bottomrule
\end{tabularx}
\caption{Proof obstacles when adapting exact-feedback queue methods to noisy values.
The rows concern different constructions. Section~\ref{sec:proof-design} explains the changes.}
\label{tab:diag}
\end{table}

\paragraph{Stochastic and i.i.d.\ constraints.}
\citet{mannor2009online} showed that sublinear regret and violation need not be jointly
attainable against adversarial constraints when the fixed comparator is required to be feasible
only in the long run. \citet{yu2017online} studied i.i.d.\ constraints observed after
the action and obtained $O(\sqrt T)$ regret and violation via a virtual-queue primal--dual method
under Slater's condition; \citet{wei2020online} weakened Slater to a bounded-Lagrange-multiplier
assumption. More recently, \citet{yu2026withoutslater} obtain $O(\sqrt T)$ expected regret and
$O(\sqrt T\log T)$ expected cumulative violation for stochastic constraints without Slater's
condition using adaptive dual regularization. These stochastic-constraint results use a fixed
constraint distribution and a comparator feasible for its mean. Our constraints may vary
adversarially, but the same comparator must be feasible on every round. The observation models
also differ: we receive local noisy values and directions of the current constraint, rather
than exact feedback on a sampled constraint.
\citet[Thm.~5]{yu2026withoutslater} also treat adversarial constraints, obtaining
$O(\sqrt T)$ regret and $O(\sqrt T\log T)$ hard violation with exact constraint feedback.
That extension uses positive parts of exact values, rather than noisy signed observations.
Signed queue updates are already standard in this literature. \LEDGER\ builds on
the adaptive-regularization mechanism of \citet[Alg.~1 and Lemmas~10--11]{yu2026withoutslater}, replacing exact-value
positive-part updates by a clipped signed reflected balance. Its zero-boundary-derivative
potential and predictable primal step sizes permit dependent finite-variance feedback and
yield $O(\sqrt{T\log(eT)})$ maximum-window violation with $O(\sqrt T)$ regret.
\citet{liakopoulos2019cautious} instead use windows of a fixed length for a comparator benchmark,
not our learner-side metric. \citet{sinha2024banditq} studies stochastic reward constraints
with bandit feedback in a fair-allocation model.

\paragraph{Window accounting and its precedent.}
The maximum signed sum over contiguous windows and its reflected-queue representation
appear in \citet[Sec.~3]{sinha2024optimal}, for online constraint
satisfaction with exact feedback and no loss objective. We use this metric with
adversarial losses and noisy local observations. Building on that accounting, the result here is a joint
regret--violation guarantee under finite variance, with a matching noise-dependent
obstruction. The expected maximum, rather than just an expected terminal balance,
is the object controlled in \eqref{eq:proof-transfer}.

\paragraph{Budgeted and knapsack formulations.}
\citet{castiglioni2022unifying} give a black-box framework for long-term-constraint problems
covering stochastic and adversarial feedback, with a global comparator whose adversarial reward
fraction depends on a strict-feasibility parameter. \citet{stradi2025noregret} show that a
prescribed spending plan suffices in a setting where the plan prescribes expected use under
time-varying reward--cost distributions; our allowance is absorbed into each round's adversarial
constraint, with a roundwise-feasible comparator. Adversarial Bandits with
Knapsacks~\citep{badanidiyuru2018bandits,immorlica2022adversarial,agrawal2014bandits} uses a
different stopping-time and competitive-ratio formulation; our additive regret objective
against a per-round-feasible comparator is not directly comparable.

\paragraph{Noise, deconvolution and lower bounds.}
Our hard-violation lower bounds are Le Cam two-point arguments~\citep{tsybakov2009introduction} in which the two
worlds are matched to high order. In Proposition~\ref{thm:lower}, the first unmatched moment is
the fourth, producing an eighth-order KL bound. The extension to arbitrary order in
Theorem~\ref{thm:lower-n} is related to nonparametric
deconvolution, where matching moments against a supersmooth (Gaussian) kernel produces
logarithmic rather than polynomial rates~\citep{fan1991optimal}. Chebyshev approximation and moment matching have also been used to construct statistically
indistinguishable laws in other estimation problems~\citep{wu2019chebyshev}.
Here we interpolate at Chebyshev extrema and evaluate just outside their interval.
The interpolation coefficients yield two explicit symmetric probability laws with
matching moments, a support gap of order $s/n^2$, and endpoint masses bounded below by
an absolute constant. A weighted payoff inequality converts that gap into an online
regret--hard-violation obstruction. The interpolation identities are classical
\citep{trefethen2013approximation}; the resulting distributions and online reduction
are proved in Appendix~\ref{app:lower}.
For the budget metric, Proposition~\ref{prop:lower-wcv-log} instead uses a Gaussian
likelihood-ratio second moment with constant affine constraints. Under uniform square-root
regret, it forces $\Omega(\sigma\sqrt{T\log T})$ budget violation, even for the terminal
metric, matching the $T$-dependence of Theorem~\ref{thm:exp} at fixed positive noise.

\paragraph{Safe learning.}
A related literature keeps a stochastic constraint satisfied at every round with high
probability, using confidence bounds and a pessimistic feasible region: see
\citet{usmanova2019safe} for safe convex optimization with an unknown constraint and
\citet{chaudhary2022safe} for online convex optimization with unknown linear safety constraints. Those
results are for a \emph{fixed} constraint function, where a pessimistic estimate can be built and
refined; the adversarial case has no fixed function to estimate, and
Theorem~\ref{thm:lower-n} excludes joint polynomially sublinear regret and hard violation
in our adversarial noisy-value model.

\paragraph{What is, and is not, the contribution.}
Signed queues, exponential potentials, adaptive dual regularization, and Gaussian
moment matching are established tools. In particular, \citet[Lemmas~10--11]{yu2026withoutslater}
provide both weighted regularizer cancellation and the logarithmic-accumulator
bound on its unweighted sum. Our upper-bound contribution is their
coupling under conditional finite variance: reflection has only a quadratic
boundary cost, predictable coefficients avoid same-round noise correlations, and
the regularizer can be paid for both in the weighted primal comparison and in the
unweighted true-window certificate. The lower bounds distinguish two statistical
obstructions within the same oracle model. Gaussian shift testing forces the
budget logarithm, while explicit moment-matched support endpoints force nearly
linear hard violation. Neither result is a claim of hardness caused by projection
complexity or missing derivative information.

\section{Constructions and proofs of the lower bounds}
\label{app:lower}

The hard-violation argument combines Gaussian smoothing with a weighted payoff
inequality. The smoothing calculation is a standard moment-matching method
\citep{fan1991optimal,wu2019chebyshev}; the reduction to testing uses total variation
and Pinsker's inequality~\citep[Ch.~2]{tsybakov2009introduction}.
We give the distribution construction and every constant explicitly.
Throughout, $\phi_a$ is the density of $N(a,\sigma^2)$, $\phi=\phi_0$, and $P*\phi$
is the density of $\zeta+\varepsilon$ for independent $\zeta\sim P$ and
$\varepsilon\sim N(0,\sigma^2)$.

\subsection{Gaussian-mixture bounds}

\begin{lemma}[Gaussian identity and mixture comparison]
\label{lem:gauss-id}\label{lem:chi2}
For all real $a,b$,
\begin{equation}
\label{eq:gaussian-inner}
\int_{\mathbb R}\frac{\phi_a(x)\phi_b(x)}{\phi_0(x)}\,dx=e^{ab/\sigma^2}.
\end{equation}
Let $P=\sum_i w_i\delta_{s a_i}$ and $P'=\sum_i w_i'\delta_{s a_i}$ be probability
laws on a common finite node set, write $c_i=w_i-w_i'$ and
$M_k=\sum_i c_i a_i^k$, and suppose $P'*\phi\ge c\phi_0$ pointwise for $c>0$.
Then
\begin{equation}
\label{eq:mixture-series}
\KL(P*\phi\|P'*\phi)
\le\frac1c\sum_{k=0}^{\infty}\frac{(s^2/\sigma^2)^k}{k!}M_k^2.
\end{equation}
\end{lemma}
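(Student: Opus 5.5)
The Gaussian identity \eqref{eq:gaussian-inner} comes from completing the square. The integrand is
\[
\frac{\phi_a\phi_b}{\phi_0}=\frac{1}{\sqrt{2\pi}\sigma}\exp\!\Big(-\frac{x^2-2(a+b)x+a^2+b^2}{2\sigma^2}\Big).
\]
Completing the square in $x$ around $a+b$ leaves the exponent
\[
-\frac{(x-a-b)^2}{2\sigma^2}+\frac{(a+b)^2-a^2-b^2}{2\sigma^2}=-\frac{(x-a-b)^2}{2\sigma^2}+\frac{ab}{\sigma^2}.
\]
The remaining Gaussian integral equals one, so the integral is $e^{ab/\sigma^2}$.

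For the mixture comparison, I would bound KL by the $\chi^2$ divergence:
\[
\KL(p\|p')\le\log\Big(1+\chi^2(p\|p')\Big)\le\chi^2(p\|p')=\int\frac{(p-p')^2}{p'}.
\]
Here $p=P*\phi$ and $p'=P'*\phi$, so $p-p'=\sum_i c_i\phi_{sa_i}$. The hypothesis $p'\ge c\phi_0$ then gives
\[
\chi^2\le\frac1c\int\frac{\big(\sum_i c_i\phi_{sa_i}\big)^2}{\phi_0}.
\]
Expanding the square and applying \eqref{eq:gaussian-inner} with $a=sa_i$ and $b=sa_j$ yields
\[
\frac1c\sum_{i,j}c_ic_je^{s^2a_ia_j/\sigma^2}.
\]

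Next I would expand $e^{s^2a_ia_j/\sigma^2}=\sum_k (s^2/\sigma^2)^k a_i^ka_j^k/k!$. The node set is finite, so the double sum is finite and the exchange of summation is trivial. This gives
\[
\sum_k\frac{(s^2/\sigma^2)^k}{k!}\Big(\sum_i c_ia_i^k\Big)^2=\sum_k\frac{(s^2/\sigma^2)^k}{k!}M_k^2,
\]
which is \eqref{eq:mixture-series}. The $k=0$ term is $M_0^2=0$ since both laws are probability measures, but keeping it is harmless.

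The only care needed is justifying the integral manipulations. The integrand is nonnegative and every integral is finite by the Gaussian identity, so there is no real obstacle. The step most worth checking is the direction of the inequality $\KL\le\chi^2$ with $p'$ in the denominator, which matches the hypothesis being imposed on $P'$.
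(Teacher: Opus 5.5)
Your proposal is correct and follows the paper's proof essentially step for step. The shared steps are: completing the square for \eqref{eq:gaussian-inner}; bounding $\KL\le\chi^2$ with $P'*\phi$ in the denominator; applying the pointwise bound $P'*\phi\ge c\phi_0$; expanding the square via the Gaussian identity; and Taylor-expanding $e^{s^2a_ia_j/\sigma^2}$ term by term over the finite node set.
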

\begin{proof}
Completing the square gives
$\phi_a\phi_b/\phi_0=e^{ab/\sigma^2}\phi_{a+b}$, proving
\eqref{eq:gaussian-inner}. If $p=P*\phi$ and $q=P'*\phi$, then
\[
\KL(p\|q)\le\chi^2(p\|q)
\le c^{-1}\int\frac{(p-q)^2}{\phi_0}
=c^{-1}\sum_{i,j}c_ic_j e^{s^2 a_i a_j/\sigma^2}.
\]
Expanding each exponential and summing the finite node indices proves
\eqref{eq:mixture-series}.
\end{proof}

\begin{lemma}[Smoothing of symmetric moment-matched laws]
\label{lem:klcheb}
Let $P,P'$ be finite symmetric probability laws on $[-s,s]$, with equal moments of
orders $0,1,\ldots,2n-1$. If $s\le\sigma/2$, then
\begin{equation}
\label{eq:matched-kl}
\KL(P*\phi\|P'*\phi)\le\frac{6}{(2n)!}(s/\sigma)^{4n}.
\end{equation}
\end{lemma}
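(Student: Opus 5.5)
We will apply Lemma~\ref{lem:chi2} after writing $P,P'$ on a common finite node set. Take the union of the supports, rescaled so that $P=\sum_i w_i\delta_{sa_i}$ and $P'=\sum_i w_i'\delta_{sa_i}$ with $|a_i|\le1$. Set $c_i=w_i-w_i'$ and $M_k=\sum_i c_ia_i^k$.

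\emph{Vanishing moments.} Equality of moments of orders $0,\dots,2n-1$ gives $M_k=0$ for $k\le 2n-1$. Symmetry of both laws makes every odd moment vanish, so $M_k=0$ for all odd $k$. The first possibly nonzero term in \eqref{eq:mixture-series} is therefore $k=2n$. For every $k$ we have $|M_k|\le\sum_i|c_i|\,|a_i|^k\le\sum_i|c_i|\le2$, hence $M_k^2\le4$. Consequently
\[
\sum_{k\ge 2n}\frac{(s^2/\sigma^2)^k}{k!}M_k^2\le 4\sum_{k\ge2n}\frac{x^k}{k!},\qquad x=s^2/\sigma^2\le\tfrac14 .
\]
Bounding the tail by $\frac{x^{2n}}{(2n)!}\sum_{j\ge0}x^j\frac{(2n)!}{(2n+j)!}\le\frac{x^{2n}}{(2n)!}\sum_j 4^{-j}=\frac{4}{3}\frac{x^{2n}}{(2n)!}$ gives a bound $\frac{16}{3}\frac{(s/\sigma)^{4n}}{(2n)!}$, before dividing by $c$.

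\emph{The lower envelope $c$.} Here we need $P'*\phi\ge c\,\phi_0$ pointwise. For any node $sa$ with $|sa|\le s$,
\[
\phi_{sa}(x)/\phi_0(x)=\exp\bigl((2xsa-s^2a^2)/(2\sigma^2)\bigr).
\]
The symmetric mixture pairs $\pm sa$, giving the factor $\cosh(xsa/\sigma^2)e^{-s^2a^2/(2\sigma^2)}\ge e^{-s^2/(2\sigma^2)}\ge e^{-1/8}$. Averaging over the weights of $P'$ yields $c=e^{-1/8}$; an atom at $0$ contributes ratio $1$, which is also fine. So the key use of symmetry is twofold: it kills odd moments, and through $\cosh\ge1$ it supplies a Gaussian lower envelope.

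\emph{Assembling the constant.} The final bound is $e^{1/8}\cdot\frac{16}{3}\approx6.04$. This slightly exceeds $6$, so the constant needs care, and that is the main obstacle. I would tighten the estimate $M_k^2\le4$ using total variation: $\sum_i|c_i|=2\,\mathrm{TV}\le2$. Better, $M_{2n}$ involves only even powers, so $|M_{2n}|\le\sum_i(c_i)_+a_i^{2n}$-type splitting gives $|M_k|\le\max\{\sum_i w_ia_i^k,\sum_i w_i'a_i^k\}\le1$ for even $k$, since both are nonnegative sums each at most $1$. This yields $M_k^2\le1$, and the total becomes $e^{1/8}\cdot\frac43<6$ comfortably.
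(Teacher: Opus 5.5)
Your proposal is correct and follows the paper's proof: the same $\cosh$-pairing lower envelope $c=e^{-1/8}$, the same vanishing of $M_k$ below order $2n$, and the same application of Lemma~\ref{lem:chi2}. The only difference is how the constant is kept below $6$: the paper keeps $|M_k|\le2$ and uses the sharper tail bound $\sum_{k\ge2n}u^k/k!\le e^{u}u^{2n}/(2n)!$, giving $4e^{3/8}<6$. Your repair, $|M_k|\le1$ for even $k$ as a difference of two numbers in $[0,1]$, is also valid and gives an even smaller constant.
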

\begin{proof}
For $|a|\le s$, pairing opposite Gaussian shifts gives
\[
\tfrac12(\phi_a(x)+\phi_{-a}(x))
=\phi_0(x)e^{-a^2/(2\sigma^2)}\cosh(ax/\sigma^2)
\ge e^{-1/8}\phi_0(x).
\]
A central atom also satisfies this lower bound, so $P'*\phi\ge e^{-1/8}\phi_0$.
In the notation of Lemma~\ref{lem:chi2}, $M_k=0$ for $k<2n$, and $|M_k|\le2$
because normalized nodes lie in $[-1,1]$. With $u=s^2/\sigma^2\le1/4$,
\[
\KL(P*\phi\|P'*\phi)
\le4e^{1/8}\sum_{k\ge2n}\frac{u^k}{k!}
\le4e^{1/8+u}\frac{u^{2n}}{(2n)!}
\le6\frac{u^{2n}}{(2n)!}.
\]
\end{proof}

\subsection{A weighted payoff inequality and the online reduction}

\begin{lemma}[Payoff with endpoint probabilities]
\label{lem:payoff}
Let $P$ be supported in $[-a,a]$ and $P'$ in $[-b,b]$, with
$0\le a<b$, $w_a=P(\{a\})>0$, and $w_b=P'(\{b\})>0$.
Define $V_P(z)=\E_{\zeta\sim P}[z+\zeta]_+$, $V_{P'}$ analogously, and
$\Delta=b-a$. Then, for every $z\in\mathbb R$,
\begin{equation}
\label{eq:weighted-payoff}
-a-z+\frac{V_P(z)}{w_a}
+\min\left\{\frac{V_{P'}(z)}{w_b},\Delta\right\}\ge\Delta.
\end{equation}
No symmetry or absence of mass between $a$ and $b$ is needed for this inequality.
\end{lemma}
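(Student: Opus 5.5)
The plan is to lower-bound each of the two expectations by the contribution of the single endpoint atom, which reduces \eqref{eq:weighted-payoff} to a deterministic piecewise-linear inequality in $z$, and then to check that inequality by splitting into two cases according to the sign of $z+a$.

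\textbf{Atom lower bounds.} Since $[\cdot]_+\ge0$, discarding every atom of $P$ except the one at $a$ gives $V_P(z)\ge w_a[z+a]_+$. Likewise $V_{P'}(z)\ge w_b[z+b]_+$. These bounds use only nonnegativity of the positive part and the endpoint masses, which is why no symmetry or structure of the laws between $a$ and $b$ is required. Dividing by $w_a>0$ and $w_b>0$, and using that $\min\{\cdot,\Delta\}$ is nondecreasing, the left side of \eqref{eq:weighted-payoff} is at least
\[
g(z)=-(a+z)+[z+a]_+ +\min\{[z+b]_+,\Delta\}.
\]
It therefore suffices to show $g(z)\ge\Delta$ for every real $z$.

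\textbf{Case $z\ge-a$.} The first two terms cancel exactly. Also $z+b\ge b-a=\Delta>0$, so the minimum equals $\Delta$ and $g(z)=\Delta$.

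\textbf{Case $z<-a$.} Here $[z+a]_+=0$ and $-(a+z)>0$. Moreover $z+b<b-a=\Delta$, so the minimum equals $[z+b]_+$. Since $[z+b]_+\ge z+b$, we get $g(z)\ge-(a+z)+(z+b)=b-a=\Delta$.

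\textbf{Main obstacle.} There is no real obstacle here. The only point needing care is the second case, where the truncation at $\Delta$ must be inactive. This holds precisely because $z<-a$ forces $z+b<\Delta$. The positive surplus $-(a+z)$, which reflects a "negative regret" action below the feasible endpoint, then combines with the untruncated term $z+b$ to give exactly $\Delta$. I would also note that the case split is exhaustive and that $a\ge0$ is used nowhere except through $\Delta>0$.
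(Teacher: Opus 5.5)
Your proof is correct and follows the paper's route: both keep only the endpoint atoms, $V_P(z)\ge w_a[z+a]_+$ and $V_{P'}(z)\ge w_b[z+b]_+$, which reduces the claim to the deterministic inequality $[-a-z]_+ + \min\{[z+b]_+,\Delta\}\ge\Delta$. The paper checks this on three intervals ($z\le -b$, $-b\le z\le -a$, $z\ge -a$), while you merge the first two using $[z+b]_+\ge z+b$; this is only a cosmetic difference.
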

\begin{proof}
The endpoint atoms give $V_P(z)\ge w_a[z+a]_+$ and
$V_{P'}(z)\ge w_b[z+b]_+$. Hence the left side of
\eqref{eq:weighted-payoff} is at least
\[
[-a-z]_++\min\{[z+b]_+,\Delta\}.
\]
This is at least $\Delta$ for $z\le-b$, equals $\Delta$ for
$-b\le z\le-a$, and equals $\Delta$ for $z\ge-a$.
\end{proof}

\begin{lemma}[Online observations and bounded testing statistics]
\label{lem:reduction}
Let $\gamma_t(x)=x-\frac12+\zeta_t$ and $\ell_t(x)=-x$ on $[0,1]$, with
$\zeta_{1:T}$ drawn independently from $P$ before play, and with exact gradients and
independent Gaussian value noise. Suppose $P$ is supported in $[-a,a]$, $a\le1/2$.
Writing $y_t=x_t-1/2$, the following hold.
\begin{enumerate}[leftmargin=1.6em,itemsep=2pt]
\item The action $u_P=1/2-a$ is feasible for every realization, and
$\Reg_T(u_P)=\sum_t(-a-y_t)$.
\item The transformed inputs $z_t=\tilde v_t-y_t=\zeta_t+\varepsilon_t$ are i.i.d.\
with law $P*\phi$. For the learner's independent random seed $U$, the action sequence
is the same measurable function of $(z_{1:T},U)$ in both worlds, with $y_t$ depending
only on $(z_{1:t-1},U)$.
\item $\E_P\ICV_T=\E_P\sum_t V_P(y_t)$.
\item For any statistic $\Xi(z_{1:T},U)\in[0,K]$ and another law $P'$,
\[
|\E_P\Xi-\E_{P'}\Xi|
\le K\sqrt{T\KL(P*\phi\|P'*\phi)/2}.
\]
\end{enumerate}
\end{lemma}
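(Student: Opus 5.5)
We prove the four items in order; each is a direct computation in the construction, and only the last needs a data-processing argument.

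\textbf{Item 1 (feasibility and regret).} Since $\zeta_t\le a$ almost surely, we have
\[
\gamma_t(u_P)=\tfrac12-a-\tfrac12+\zeta_t=\zeta_t-a\le0
\]
for every realization, and $u_P\in[0,1]$ because $a\le1/2$. Hence $u_P$ is feasible on every round. For the regret we use $\ell_t(x)=-x$:
\[
\Reg_T(u_P)=\sum_t\bigl(u_P-x_t\bigr)=\sum_t\bigl(\tfrac12-a-x_t\bigr)=\sum_t(-a-y_t).
\]

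\textbf{Item 2 (structure of the observations).} The oracle returns the exact gradients $\tilde p_t=-1$ and $\tilde q_t=1$, which carry no information. It also returns
\[
\tilde v_t=x_t-\tfrac12+\zeta_t+\varepsilon_t .
\]
Because the learner knows $x_t$, the map $\tilde v_t\mapsto z_t=\tilde v_t-y_t$ is invertible given the past. By induction on $t$, $x_t$ is a fixed measurable function of $(z_{1:t-1},U)$; this function comes from the learner's rule alone and does not depend on $P$. The pairs $(\zeta_t,\varepsilon_t)$ are independent across $t$ and independent of $U$, and $\zeta$ is drawn before play independently of the actions. Therefore $z_t=\zeta_t+\varepsilon_t$ are i.i.d.\ with law $P*\phi$ and independent of $U$, in either world.

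\textbf{Item 3 (hard violation).} The key point is that $y_t$ is a function of $(z_{1:t-1},U)$, which involves only $\zeta_{1:t-1}$, $\varepsilon_{1:t-1}$ and $U$. Hence $y_t$ is independent of $\zeta_t$. Conditioning on $y_t$ gives
\[
\E[y_t+\zeta_t]_+=\E\, V_P(y_t).
\]
Summing over $t$ yields $\E_P\ICV_T=\E_P\sum_tV_P(y_t)$.

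\textbf{Item 4 (testing bound).} The main care is to make the data-processing step precise. By item 2, $\Xi$ is a fixed function of $(z_{1:T},U)$. Its joint law is $(P*\phi)^{\otimes T}\otimes\mu_U$ in one world and $(P'*\phi)^{\otimes T}\otimes\mu_U$ in the other, where $\mu_U$ is the common law of the seed.

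For a statistic with values in $[0,K]$,
\[
|\E\Xi-\E'\Xi|\le K\,\mathrm{TV}.
\]
Pinsker's inequality gives $\mathrm{TV}\le\sqrt{\KL/2}$. The KL divergence of the product laws tensorizes, and the $U$ factor is identical in both worlds and contributes nothing, so it equals $T\,\KL(P*\phi\|P'*\phi)$. Combining these,
\[
|\E_P\Xi-\E_{P'}\Xi|\le K\sqrt{T\,\KL(P*\phi\|P'*\phi)/2}.
\]
If the KL divergence is infinite, the bound is trivial.
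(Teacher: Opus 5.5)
Your proposal is correct and follows the paper's proof essentially step for step. It evaluates $\gamma_t(u_P)$ and the linear loss directly, reconstructs $\tilde v_t=z_t+y_t$ recursively from the seed and past inputs, and uses independence of $\zeta_t$ from $y_t$ for the hard-violation identity. For the testing bound it combines Pinsker's inequality with tensorization of the product input laws $(P*\phi)^{\otimes T}\otimes P_U$. One small wording fix in item 2: the $z_t$ have law $P*\phi$ in the $P$-world and $P'*\phi$ in the $P'$-world, not $P*\phi$ ``in either world''. What is common to both worlds is the measurable map from $(z_{1:T},U)$ to the actions.
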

\begin{proof}
The first claim follows from $\gamma_t(u_P)=\zeta_t-a\le0$ and the linear loss.
Given the strategy and its seed, the original feedback is recovered recursively from
$\tilde v_t=z_t+y_t$. Thus the action-dependent shift adds no information, proving the
second claim. Independence of the current level from the past action gives the third.
Finally, the joint input laws are $(P*\phi)^{\otimes T}\otimes P_U$ and
$(P'*\phi)^{\otimes T}\otimes P_U$. Their relative entropy is
$T\KL(P*\phi\|P'*\phi)$; Pinsker's inequality and the range of $\Xi$ prove the last claim.
\end{proof}

\begin{proposition}[Master hard-violation lower bound]
\label{prop:master-lower}
Under the endpoint conditions of Lemma~\ref{lem:payoff}, suppose $b\le1/2$ and
$T\KL(P*\phi\|P'*\phi)\le1/2$.
Every learner then satisfies
\begin{equation}
\label{eq:master-hard}
\max\{\E_P\Reg_T(u_P),\E_P\ICV_T,\E_{P'}\ICV_T\}
\ge\frac{(b-a)T}{2(1+w_a^{-1}+w_b^{-1})}.
\end{equation}
\end{proposition}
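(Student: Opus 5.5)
Sum the weighted payoff inequality of Lemma~\ref{lem:payoff} over rounds, evaluated at $z=y_t=x_t-\tfrac12$, and then transfer the $P'$-term from world $P'$ to world $P$ via the testing bound of Lemma~\ref{lem:reduction}. For each $t$, Lemma~\ref{lem:payoff} gives
\[
(-a-y_t)+\frac{V_P(y_t)}{w_a}+\min\Big\{\frac{V_{P'}(y_t)}{w_b},\Delta\Big\}\ge\Delta .
\]
Sum over $t$ and take expectation under $P$. By Lemma~\ref{lem:reduction}(1) the first sum is $\Reg_T(u_P)$, and by part~(3) the second sum has expectation $\E_P\ICV_T/w_a$. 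This yields
\[
\E_P\Reg_T(u_P)+\frac{\E_P\ICV_T}{w_a}+\E_P\,\Xi\ \ge\ \Delta T,
\qquad \Xi=\sum_{t=1}^T\min\Big\{\frac{V_{P'}(y_t)}{w_b},\Delta\Big\}.
\]

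The capped third term is the key. Since $y_t$ depends only on $(z_{1:t-1},U)$ by Lemma~\ref{lem:reduction}(2), the statistic $\Xi$ is a measurable function of $(z_{1:T},U)$ with range $[0,\Delta T]$. Part~(4) with $K=\Delta T$ and $T\KL\le 1/2$ then gives
\[
\E_P\Xi\le \E_{P'}\Xi+\Delta T\sqrt{1/4}=\E_{P'}\Xi+\tfrac12\Delta T.
\]
Under $P'$ (supported in $[-b,b]$, $b\le 1/2$, so Lemma~\ref{lem:reduction}(3) applies to $P'$ as well), dropping the cap gives
\[
\E_{P'}\Xi\le \frac{1}{w_b}\,\E_{P'}\sum_t V_{P'}(y_t)=\frac{\E_{P'}\ICV_T}{w_b}.
\]
Combining the two displays,
\[
\E_P\Reg_T(u_P)+\frac{\E_P\ICV_T}{w_a}+\frac{\E_{P'}\ICV_T}{w_b}\ge \tfrac12\Delta T .
\]

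Let $m$ denote the maximum of the three quantities. The left side is at most $m(1+w_a^{-1}+w_b^{-1})$. Regret can be negative, but the bound still holds because $\Reg\le m$ and the ICV terms are nonnegative with positive coefficients. Hence
\[
m\ge \frac{\Delta T}{2(1+w_a^{-1}+w_b^{-1})},
\]
which is \eqref{eq:master-hard}.

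The main obstacle is making the change of measure legitimate. This needs a statistic that is bounded, with the bound proportional to $\Delta T$ so that the testing error only costs a constant fraction of $\Delta T$, and that is a function of the common inputs $(z_{1:T},U)$ in both worlds. The cap $\min\{\cdot,\Delta\}$ built into Lemma~\ref{lem:payoff} is what supplies this boundedness. The identification of actions across worlds is supplied by part~(2) of Lemma~\ref{lem:reduction}.
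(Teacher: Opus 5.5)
Your proposal is correct and follows the paper's proof essentially step for step. You sum the weighted payoff inequality at $z=y_t$, move the capped statistic $\Xi\in[0,\Delta T]$ from world $P$ to world $P'$ via Lemma~\ref{lem:reduction}(4) at cost $\Delta T/2$, drop the cap under $P'$, and bound the positively weighted sum by $(1+w_a^{-1}+w_b^{-1})$ times the maximum; that last step needs only that each term is at most the maximum.
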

\begin{proof}
Set $R=\E_P\Reg_T(u_P)$, $V=\E_P\ICV_T$, $V'=\E_{P'}\ICV_T$, and
\[
\Xi=\sum_{t=1}^T\min\{V_{P'}(y_t)/w_b,b-a\}\in[0,T(b-a)].
\]
Lemma~\ref{lem:payoff} and the first three parts of Lemma~\ref{lem:reduction} give
$R+V/w_a+\E_P\Xi\ge(b-a)T$.
The input total variation is at most $1/2$, so
$\E_P\Xi\le\E_{P'}\Xi+(b-a)T/2\le V'/w_b+(b-a)T/2$.
Therefore $R+V/w_a+V'/w_b\ge(b-a)T/2$.
A weighted sum with these positive coefficients is at most
$(1+w_a^{-1}+w_b^{-1})$ times the largest term. This remains true if $R$ is negative.
\end{proof}

\subsection{Explicit moment-matched laws with substantial endpoint mass}
\label{app:lower-constructions}

We use the elementary Chebyshev identities
$\mathrm T_m(\cos\theta)=\cos(m\theta)$ and
$\mathrm T_m(\cosh t)=\cosh(mt)$, together with Lagrange interpolation
\citep{trefethen2013approximation}. The following construction is explicit and does not
require solving a moment problem.

For $n=1$, use $P_1=\delta_0$ and
$P_1'=\frac12\delta_{-s}+\frac12\delta_s$.
For $n\ge2$, set
\begin{equation}
\label{eq:cheb-nodes}
m=2n-2,\quad c=\cosh(1/m),\quad a=s/c,\quad b=s,\quad
\xi_j=\cos(j\pi/m)\quad(0\le j\le m).
\end{equation}
Define the Lagrange coefficients at the point $c>1$ by
\begin{equation}
\label{eq:cheb-weights}
\lambda_j=\prod_{\substack{0\le k\le m\\k\ne j}}
\frac{c-\xi_k}{\xi_j-\xi_k},\qquad
\beta_j=\frac{\lambda_j+\lambda_{m-j}}2,\qquad
A=\frac{1+\cosh1}{2}.
\end{equation}
The probability laws are
\begin{align}
P_n&=\frac1A\sum_{\substack{0\le j\le m\\j\ \mathrm{even}}}
\beta_j\delta_{a\xi_j},
\label{eq:cheb-P}\\*
P_n'&=\frac1A\left(\frac{\delta_s+\delta_{-s}}2+
\sum_{\substack{0\le j\le m\\j\ \mathrm{odd}}}(-\beta_j)\delta_{a\xi_j}\right).
\label{eq:cheb-Q}
\end{align}

\begin{lemma}[Validity, moments, endpoints, and gap]
\label{lem:cheb-endpoints}
The laws $P_n,P_n'$ above are symmetric probabilities with identical moments through
order $2n-1$. Their maximum support points $a_n,b_n$ satisfy
\begin{equation}
\label{eq:cheb-gap}
b_n-a_n\ge\frac{s}{16n^2},\qquad
P_n(\{a_n\})\ge\frac13,\quad P_n'(\{b_n\})\ge\frac13.
\end{equation}
For $n\ge2$, $a_n=a$ and $b_n=s$; for $n=1$, $a_1=0$ and $b_1=s$.
\end{lemma}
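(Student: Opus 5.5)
The plan is to read both laws off one signed identity coming from Lagrange interpolation at the Chebyshev extrema. The case $n=1$ is immediate: $P_1=\delta_0$ and $P_1'=\frac12(\delta_{-s}+\delta_s)$ are symmetric, share moments of orders $0$ and $1$, and have $a_1=0$, $b_1=s$, gap $s\ge s/16$, and endpoint masses $1$ and $\frac12$.

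For $n\ge2$, the Lagrange coefficients $\lambda_j$ reproduce the value at $c$ of every polynomial of degree $\le m$, that is, $p(c)=\sum_j\lambda_jp(\xi_j)$. The nodes satisfy $\xi_{m-j}=-\xi_j$. Applying the identity to $p(-x)$ and averaging therefore gives
\[
\textstyle\sum_j\beta_jp(\xi_j)=\tfrac12\bigl(p(c)+p(-c)\bigr)\qquad(\deg p\le m).
\]
Both sides vanish on odd monomials, and $m+1=2n-1$ is odd, so the identity extends to degree $2n-1$.

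Next I pin down the signs of the coefficients. Since $c>1>\xi_k$ for all $k$, every factor $c-\xi_k$ is positive. The $\xi_k$ decrease in $k$, so $\prod_{k\ne j}(\xi_j-\xi_k)$ has sign $(-1)^j$. Hence $\lambda_j$ has sign $(-1)^j$, and because $m$ is even, $\lambda_{m-j}$ has the same sign. Thus $\beta_j\ge0$ for even $j$ and $\beta_j\le0$ for odd $j$. Moving the odd-indexed terms to the right-hand side gives
\[
\textstyle\sum_{j\text{ even}}\beta_jp(\xi_j)=\tfrac12\bigl(p(c)+p(-c)\bigr)+\sum_{j\text{ odd}}(-\beta_j)p(\xi_j),
\]
an identity between two nonnegative measures. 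Substituting $x\mapsto x/a$ and using $ac=s$ turns it into equality of the moments of orders $0,\dots,2n-1$ of $AP_n$ and $AP_n'$.

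It remains to show the common total mass is $A$. Apply the interpolation identity to $p\equiv1$ and to $p=\mathrm T_m$. Since $\mathrm T_m(\xi_j)=(-1)^j$ and $\mathrm T_m(\cosh(1/m))=\cosh1$, this gives $\sum_j\lambda_j=1$ and $\sum_j|\lambda_j|=\cosh1$. Hence $\sum_{j\text{ even}}\lambda_j=A$, and since $j\mapsto m-j$ preserves parity, $\sum_{j\text{ even}}\beta_j=A$ as well. Symmetry of both laws is built into the $\beta_j$ and into $\frac12(\delta_s+\delta_{-s})$. The largest support points are therefore $a\xi_0=a$ for $P_n$ and $s$ for $P_n'$, because all nodes $a\xi_j$ lie in $[-a,a]$ with $a<s$.

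For the gap, write $x=1/m\le1/2$ and use $\cosh x-1\ge x^2/2$. Then
\[
b_n-a_n=s\,\frac{c-1}{c}\ge\frac{s}{2m^2\cosh1}\ge\frac{s}{8\cosh(1)\,n^2}\ge\frac{s}{16n^2},
\]
where the middle step uses $m\le2n$. For $P_n'$, the mass at $s$ is $\frac1{2A}=\frac1{1+\cosh1}\approx0.39\ge\frac13$.

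The main obstacle is the endpoint mass of $P_n$, which is $\beta_0/A\ge\lambda_0/(2A)$ because $\lambda_m\ge0$. I would compute $\lambda_0$ from the closed form of the cardinal function. The nodes are the roots of $(1-x^2)\mathrm T_m'(x)$, and $\mathrm T_m'=m\,U_{m-1}$ with $U_{m-1}(1)=m$, so
\[
\ell_0(x)=\frac{(1+x)\,U_{m-1}(x)}{2m}.
\]
At $c=\cosh x$, $U_{m-1}(\cosh x)=\sinh(mx)/\sinh x=\sinh1/\sinh x$. This gives
\[
\lambda_0=\sinh1\cdot\frac{1+\cosh x}{2}\cdot\frac{x}{\sinh x}.
\]
Expanding in $x$, the product $\frac{1+\cosh x}{2}\cdot\frac{x}{\sinh x}$ is at least $1$ on $(0,1/2]$; a short series comparison, or a direct check at the endpoint $x=\tfrac12$ together with monotonicity, should confirm this. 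Then $\lambda_0\ge\sinh1\approx1.175$, so $\beta_0/A\ge\sinh(1)/(1+\cosh1)\approx0.46\ge\frac13$.
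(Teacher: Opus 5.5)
Your proof is correct. Apart from one step, it follows the paper's own argument:
\begin{itemize}
\item the sign pattern $\operatorname{sign}\lambda_j=(-1)^j$;
\item interpolating $1$ and $\mathrm T_m$ to get total mass $A$ for both measures;
\item the symmetrized interpolation identity for the moments, extended to the odd order $2n-1$ by symmetry;
\item the bound $\cosh t-1\ge t^2/2$ for the gap.
\end{itemize}

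The one divergence is the endpoint mass of $P_n$, which you call the main obstacle. The paper handles it in one line. It writes $\lambda_0=\prod_{k=1}^m(c-\xi_k)/(1-\xi_k)$, and every factor exceeds one because $c>1$. Hence $\lambda_0>1$ and $\beta_0\ge\lambda_0/2>1/2$, so $P_n(\{a\})\ge 1/(2A)=1/(1+\cosh 1)>1/3$.

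Your closed form $\lambda_0=\sinh 1\cdot\frac{1+\cosh x}{2}\cdot\frac{x}{\sinh x}$ is correct, and it gives the sharper constant $\sinh 1/(1+\cosh 1)\approx 0.46$. The cost is the extra Chebyshev-$U$ identities and an inequality you left unchecked. That inequality does hold: $\frac{1+\cosh x}{2}\cdot\frac{x}{\sinh x}=\frac{x/2}{\tanh(x/2)}\ge 1$, since $\tanh y\le y$ for $y\ge0$.

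Your suggested fallback, a check at $x=1/2$ plus monotonicity, points the wrong way. This function increases in $x$, so its infimum $1$ is the limit as $x\to0^+$, not its value at $x=1/2$.
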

\begin{proof}
The case $n=1$ is immediate. Assume $n\ge2$, so $m$ is positive and even.
The nodes $\xi_j$ are strictly decreasing. In \eqref{eq:cheb-weights}, every numerator
is positive and exactly $j$ denominator factors are negative, hence
$\operatorname{sign}(\lambda_j)=(-1)^j$.
Because $m$ is even, $\lambda_j$ and $\lambda_{m-j}$ have the same sign, so
$\operatorname{sign}(\beta_j)=(-1)^j$ and $\beta_j=\beta_{m-j}$.

Interpolation of the constant polynomial and of $\mathrm T_m$ gives
\[
\sum_{j=0}^m\lambda_j=1,\qquad
\sum_{j=0}^m|\lambda_j|
=\sum_{j=0}^m(-1)^j\lambda_j
=\mathrm T_m(c)=\cosh1.
\]
Symmetrization preserves these two sums because paired coefficients have the same sign.
Thus $\sum_j\beta_j=1$, $\sum_j|\beta_j|=\cosh1$, and their positive mass is $A$.
The negative mass is $A-1$. Equations~\eqref{eq:cheb-P}--\eqref{eq:cheb-Q} therefore
specify nonnegative masses summing to one; symmetry follows from
$\xi_{m-j}=-\xi_j$ and $\beta_{m-j}=\beta_j$.

For every polynomial $p$ of degree at most $m$, interpolation and symmetrization give
\[
\sum_{j=0}^m\beta_j p(a\xi_j)=\frac{p(ac)+p(-ac)}2
=\frac{p(s)+p(-s)}2.
\]
Rearranging this identity proves that $P_n$ and $P_n'$ have the same moments through
degree $m=2n-2$. Both are symmetric, so their moment of the next, odd degree $2n-1$
is zero as well.

The atom at $a$ in $P_n$ has mass $w_a=\beta_0/A$.
Each factor in
$\lambda_0=\prod_{k=1}^m(c-\xi_k)/(1-\xi_k)$ is greater than one.
Also $\lambda_m>0$, whence $\beta_0\ge1/2$ and
$w_a\ge1/(1+\cosh1)>1/3$.
The atom at $s$ in $P_n'$ has mass exactly
$w_b=1/(2A)=1/(1+\cosh1)>1/3$.
These atoms establish the claimed support endpoints.
Finally, $m\le2n$, $\cosh(1/m)<2$, and
$\cosh t-1\ge t^2/2$ imply
\[
s-a=s\frac{\cosh(1/m)-1}{\cosh(1/m)}
\ge\frac{s}{4m^2}\ge\frac{s}{16n^2}.
\]
\end{proof}

\subsection{Proof of the hard-violation theorem}

\begin{proof}[Proof of Theorem~\ref{thm:lower-n}]
Use \eqref{eq:lower-scale} and the laws in
\eqref{eq:cheb-P}--\eqref{eq:cheb-Q}, or the stated two-point construction for $n=1$.
Let $\zeta_t$ have the selected law and set
$\gamma_t(x)=x-1/2+\zeta_t$.
The comparator $u_n=1/2-a_n$ is feasible under $P_n$ on every realization; the
comparator $1/2-s$ is feasible under $P_n'$.
The levels lie in $[-1,1]$ on $[0,1]$ because $s\le1/2$.
Since $s\le\sigma/2$, Lemma~\ref{lem:klcheb} and the chosen scale imply
\[
T\KL(P_n*\phi\|P_n'*\phi)
\le\frac{6T}{(2n)!}(s/\sigma)^{4n}\le\frac12.
\]
By Lemma~\ref{lem:cheb-endpoints},
$1+w_a^{-1}+w_b^{-1}\le7$ and $b_n-a_n\ge s/(16n^2)$.
Proposition~\ref{prop:master-lower} therefore gives the stronger finite-horizon bound
$sT/(224n^2)$, and hence the stated $sT/(256n^2)$.

For fixed $n,\sigma$, the third term in \eqref{eq:lower-scale} eventually attains the
minimum. Moreover,
$((2n)!/12)^{1/(4n)}\ge(1/6)^{1/(4n)}\ge1/2$.
Thus the lower bound is at least $\sigma T^{1-1/(4n)}/(512n^2)$.

For the growing-order bound, let $n(T)$ be the smallest positive integer satisfying
$(2n)!16^n\ge12T$. The third term in \eqref{eq:lower-scale} is then at least
$\sigma/2$, so $s=\min\{\sigma,1\}/2$ and the lower bound is
\begin{equation}
\label{eq:hard-growing-explicit}
\frac{\min\{\sigma,1\}T}{512n(T)^2}.
\end{equation}
For completeness, $n(T)\le\lceil\log T\rceil+1$: the inequality
$(2r)!16^r\ge12e^{r-1}$ holds at $r=1$ and its left side grows by a factor of at
least $192$ when $r$ increases by one, while the right side grows by $e$.
This proves the $T/\log^2T$ statement directly. Stirling's formula
\citep{robbins1955remark} gives
$\log((2n)!16^n/12)=2n\log n+O(n)$.
Taking $n=\lceil2\log T/\log\log T\rceil$ for sufficiently large $T$ satisfies the
required inequality. Hence $n(T)=O(\log T/\log\log T)$, proving the sharper
\eqref{eq:lower-growing}.
\end{proof}

\begin{remark}[Randomized instances and deterministic extraction]
\label{rem:lower-quant}
The instance distribution is oblivious: the entire level sequence is drawn before play.
Only the current value is observed through noise. The comparator $u_n$ is fixed and
feasible for every realization in its world, so the expected-regret quantifier is the
same as in the upper bound. Whichever expected quantity attains the lower-bound maximum
has a realization of the level sequence whose conditional expectation is at least as
large. This supplies a deterministic oblivious instance for each learner.
The lower bound already grants the full loss and the exact constraint slope, since
both are identical in the two worlds. It does not rely on concealing gradient information.
\end{remark}

\subsection{An illustrative mean- and variance-matched pair}

\begin{proposition}[Fourth-moment separation]
\label{thm:lower}
For $\sigma>0$, $T\ge32$, and $s=\min\{\sigma(8T)^{-1/8},1/4\}$, take
$P=\frac12\delta_{-s}+\frac12\delta_s$ and
$P'=\frac18\delta_{-2s}+\frac34\delta_0+\frac18\delta_{2s}$.
Under the preceding linear-loss and affine-constraint model,
\[
\max\{\E_P\Reg_T(1/2-s),\E_P\ICV_T,\E_{P'}\ICV_T\}
\ge\frac{sT}{36}
\ge\min\left\{\frac{\sigma T^{7/8}}{48},\frac{T}{144}\right\}.
\]
\end{proposition}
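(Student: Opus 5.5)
The plan is to apply Proposition~\ref{prop:master-lower} directly, with $P=\frac12\delta_{-s}+\frac12\delta_s$ and $P'=\frac18\delta_{-2s}+\frac34\delta_0+\frac18\delta_{2s}$. In the notation of Lemma~\ref{lem:payoff} the parameters are $a=s$, $b=2s$, $w_a=1/2$ and $w_b=1/8$. The comparator $u_P=1/2-a=1/2-s$ is the one in the statement. Since $s\le1/4$, we have $b=2s\le1/2$, so the support conditions hold. The levels $x-\frac12+\zeta_t$ lie in $[-1,1]$. Granting the KL condition, \eqref{eq:master-hard} gives the lower bound
\[
\frac{(b-a)T}{2(1+2+8)}=\frac{sT}{22}\ge\frac{sT}{36}.
\]
So the substance is verifying $T\,\KL(P*\phi\|P'*\phi)\le1/2$.

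For the KL bound I will use Lemma~\ref{lem:chi2} on the normalized node set $\{-2,-1,0,1,2\}$ at scale $s$.

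\emph{Density lower bound.} The central atom of $P'$ alone gives $P'*\phi\ge\frac34\phi_0$, so we may take $c=3/4$.

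\emph{Coefficients.} The differences are $c_{\pm1}=1/2$, $c_{\pm2}=-1/8$ and $c_0=-3/4$. Odd $M_k$ vanish by symmetry. For even $k\ge2$,
\[
M_k=1-2^{k-2},
\]
and $M_0=0$. Thus $M_2=0$ (this is the variance match), $M_4=-3$, and in general $M_k^2\le4^{k-2}$ for $k\ge4$.

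\emph{Summing the series.} Put $u=s^2/\sigma^2$. Because $T\ge32$ gives $(8T)^{1/8}\ge2$, we have $s\le\sigma/2$, hence $4u\le1$. Then
\[
\KL\le\frac43\sum_{k\ge4}\frac{(4u)^k}{16\,k!}
\le\frac{4}{3\cdot16}\cdot\frac{(4u)^4e^{4u}}{24}
\le\frac{256e}{288}\,u^4<2.5\,u^4.
\]
Since $u^4=(s/\sigma)^8\le1/(8T)$, this yields $T\,\KL\le2.5/8<1/2$, as required.

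For the second inequality in the statement there are two cases. If $s=1/4$, then $sT/36=T/144$. Otherwise $s=\sigma(8T)^{-1/8}$, and
\[
\frac{sT}{36}=\frac{\sigma T^{7/8}}{36\cdot8^{1/8}}\ge\frac{\sigma T^{7/8}}{48},
\]
because $36\cdot8^{1/8}\approx46.7<48$.

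The main obstacle is the explicit tail summation: getting usable absolute constants out of Lemma~\ref{lem:chi2} with nodes at $\pm2$. Lemma~\ref{lem:klcheb} does not apply verbatim, since it assumes normalized nodes in $[-1,1]$. The choice of scale $(8T)^{-1/8}$ together with $T\ge32$ is exactly what makes the fourth-order term dominate with room to spare.
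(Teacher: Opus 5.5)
Your proposal is correct and follows essentially the same route as the paper: you apply Lemma~\ref{lem:chi2} with $c=3/4$ on the nodes $\{-2,\dots,2\}$, then Proposition~\ref{prop:master-lower} with $a=s$, $b=2s$, $w_a=1/2$, $w_b=1/8$ to get $sT/22$, and finish with the same two-case check using $36\cdot 8^{1/8}<48$. The only difference is cosmetic: the paper keeps the exact $k=4$ term $M_4^2=9$ and bounds only the $k\ge6$ tail, obtaining $\KL\le 0.6u^4$ and $T\KL\le 1/8$, whereas your cruder uniform bound $M_k^2\le 4^{k-2}$ gives $\KL<2.5u^4$ and $T\KL\le 5/16$, which still meets the required $T\KL\le 1/2$.
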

\begin{proof}
The laws agree through the third moment. On normalized nodes
$-2,-1,0,1,2$, their signed weights are $-1/8,1/2,-3/4,1/2,-1/8$.
The fourth-moment difference is $-3$, all odd differences vanish, and for even
$k\ge6$ the absolute difference is at most $2^k/4$.
Also $P'*\phi\ge(3/4)\phi_0$. Since $s\le\sigma/2$ under the stated choice,
$u=s^2/\sigma^2\le1/4$, and Lemma~\ref{lem:chi2} gives
\begin{align*}
\KL(P*\phi\|P'*\phi)
&\le\frac43\left(\frac9{24}u^4+\frac1{16}\sum_{k\ge6}\frac{(4u)^k}{k!}\right)\\
&\le\frac43\left(\frac9{24}+\frac{4096e}{16\cdot720\cdot16}\right)u^4
\le0.6u^4\le(s/\sigma)^8.
\end{align*}
Thus $T\KL\le1/8$. The endpoint parameters are
$a=s$, $b=2s$, $w_a=1/2$, and $w_b=1/8$.
Proposition~\ref{prop:master-lower} yields $sT/22$, which implies $sT/36$.
Finally $36\,8^{1/8}<48$, proving the displayed constants.
For the bound $s\le\sigma/2$, either $s=\sigma(8T)^{-1/8}$ and $T\ge32$, or
$s=1/4\le\sigma(8T)^{-1/8}$, which gives $\sigma\ge1/2$.
\end{proof}

\subsection{The Gaussian budget tradeoff}
\label{app:lower-budget-log}

\begin{proof}[Proof of Proposition~\ref{prop:lower-wcv-log}]
Use \eqref{eq:budget-pair} with exact gradients $p_t=-1$, $q_t=\rho$ and independent
Gaussian value noise. In world $A$, $u_A=1$ is optimal and
$\Reg_T(1)=\sum_t(1-x_t)\ge0$; in world $B$, $u_B=0$ is the only feasible action and
\[
\WCV_T=\CCV_T=\ICV_T=\rho\sum_t x_t.
\]
The transformed inputs $Y_t=\tilde v_t-\rho x_t$ have laws
$N(-\rho,\sigma^2)$ in $A$ and $N(0,\sigma^2)$ in $B$, independently over rounds.
The learner's actions are the same measurable function of the transformed history and
its seed in both worlds, because $\tilde v_t=Y_t+\rho x_t$ reconstructs the feedback.
For their joint input laws, including the common seed distribution, the likelihood ratio is
\[
J=\frac{dP_B}{dP_A}
=\exp\left(\frac\rho{\sigma^2}\sum_{t=1}^T Y_t+
\frac{T\rho^2}{2\sigma^2}\right),\qquad
\E_AJ^2=e^{T\rho^2/\sigma^2}.
\]
The second identity follows by the Gaussian moment-generating function.
With $Z=T^{-1}\sum_t(1-x_t)\in[0,1]$, $r_A=\E_A\Reg_T(1)$, and
$b_B=\E_B\WCV_T$, Cauchy--Schwarz and $Z^2\le Z$ give
\[
1-\frac{b_B}{\rho T}=\E_BZ=\E_A[JZ]
\le e^{T\rho^2/(2\sigma^2)}\sqrt{r_A/T}.
\]
For $\rho=2\mathcal B_T/T\in(0,1]$, the uniform budget guarantee implies
$b_B\le\mathcal B_T=\rho T/2$, so
\[
R_T\ge r_A\ge\frac T4e^{-T\rho^2/\sigma^2}
=\frac T4\exp\left(-\frac{4\mathcal B_T^2}{\sigma^2T}\right).
\]
Taking logarithms proves \eqref{eq:budget-inverse} when $0<R_T<T/4$.
All quantities are bounds for the normalized class; the actual slope $\rho$ may be
smaller than one without changing the declared constants.
\end{proof}

\begin{proof}[Proof of Corollary~\ref{cor:budget-joint}]
Set $a_\sigma=\min\{\sigma,1\}$,
$\rho=(a_\sigma/2)\sqrt{\log T/T}\in(0,1]$, and
$h=(a_\sigma/4)\sqrt{T\log T}=\rho T/2$.
If $r_A\ge h$, \eqref{eq:joint-budget-lower} follows. Otherwise,
\eqref{eq:budget-testing} yields
\[
1-\frac{b_B}{\rho T}
\le T^{1/8}\sqrt{h/T}
=\frac{\sqrt{a_\sigma}}2\frac{(\log T)^{1/4}}{T^{1/8}}
\le\frac12,
\]
using $a_\sigma\le\sigma$, $a_\sigma\le1$, and $\log T\le\sqrt T$.
Hence $b_B\ge h$, proving the joint lower bound for every $T\ge2$.

If $R_T\le C_R\sqrt T$ and $0<\mathcal B_T\le T/2$, then for all sufficiently
large $T$, \eqref{eq:budget-inverse} gives
\[
\mathcal B_T\ge\frac\sigma2
\sqrt{T\left(\tfrac12\log T-\log(4C_R)\right)}.
\]
When $\mathcal B_T>T/2$, the same asymptotic lower order is automatic for fixed
$\sigma$. A zero uniform budget bound forces $x_t=0$ almost surely in world $B$;
absolute continuity of the Gaussian input laws then forces the same actions in $A$,
giving $r_A=T$. Thus that case cannot coexist with square-root regret for large $T$.
Finally, insert $\mathcal B_T=C_B\sqrt T$ in
\eqref{eq:lower-wcv-log} for $T\ge4C_B^2$ to prove the endpoint statement.
\end{proof}

\paragraph{Scope of the lower bounds.}
The hard-violation bound uses fresh random levels and Gaussian smoothing. It does not
extend to every bounded-noise model merely by truncating the Gaussian; support effects
can change the testing problem. The budget construction instead has constant constraints
and varies their slope with the horizon, which is permitted in a uniform worst-case bound.
It still applies if the learner knows both candidate instances and the common feasible
action $0$. Neither lower bound applies when the value channel is exact.

\subsection{Further tradeoff and oracle consequences}
\label{app:lower-consequences}

\paragraph{The logarithm persists beyond square-root regret.}
If a uniform regret bound satisfies $R_T\le CT^\beta$ for fixed $C>0$ and
$0\le\beta<1$, then \eqref{eq:budget-inverse} gives
\begin{equation}
\label{eq:polynomial-regret-budget}
\mathcal B_T\ge\frac\sigma2
\sqrt{T\big((1-\beta)\log T-\log(4C)\big)}
\end{equation}
whenever $0<\mathcal B_T\le T/2$ and the radicand is positive. For fixed
$\sigma$, the case $\mathcal B_T>T/2$ already exceeds this order for large $T$;
a zero budget bound forces linear regret by absolute continuity. Thus every fixed
polynomial improvement over linear regret requires
$\Omega(\sigma\sqrt{T\log T})$ budget violation. Conversely, a uniform
$\mathcal B_T=o(\sigma\sqrt{T\log T})$ implies the necessary regret lower bound
$R_T\ge T^{1-o(1)}$ by \eqref{eq:lower-wcv-log}. These are direct consequences
of the finite-horizon test, not additional assumptions on the adversary.

\paragraph{Exact derivatives do not reveal the missing level.}
Both lower bounds remain valid if the learner is given the common feasible action
$0$ and may query exact derivatives at arbitrary points. In the hard-violation
construction, $\ell_t'(x)=-1$ and $\gamma_t'(x)=1$ everywhere in both worlds;
in the budget construction, $\gamma_t'(x)=\rho$ in both worlds. All higher
derivatives vanish. Such answers, and the disclosed feasible action, are identical
across the alternatives and add no distinguishing information to the transformed
noisy values. These obstructions therefore hold for smooth affine functions.
They do not cover an oracle that reveals exact constraint values or permits
additional value samples per round.

\section{\texorpdfstring{\LEDGER}{Ledger}: finite-variance analysis}
\label{app:exp}

We prove Theorem~\ref{thm:exp}. Throughout, $L=D\Gh$, $B_v=(L^2+\sigma^2)^{1/2}\ge L$,
and $0<\alpha\le(64B_v)^{-1}$ is fixed within the epoch. The prescribed choice is $\alpha=\sqrt{\log(eT)}/(64B_v\sqrt T)$.
All quantities refer to that algorithm.
Comparators are fixed deterministic $u\in\Ucal$; conditional expectations use the
pre-noise field $\Fs_t$ and martingales use the post-feedback filtration $(\mathcal G_t)$.

\paragraph{Proof map and attribution.}
The projection comparison in Lemma~\ref{lem:exp-primal} is a predictable variable-step
version of standard OGD~\citep{zinkevich2003online}. The weighted regularizer
cancellation in \eqref{eq:exp-cancel} and the logarithmic-accumulator argument in
Lemma~\ref{lem:exp-reg-cost} adapt \citet[Alg.~1 and Lemmas~10--11]{yu2026withoutslater}.
The proof-specific work is to make these components compatible with the reflected potential and dependent
finite-variance feedback (Lemma~\ref{lem:exp-drift}), and to control the true expected
maximum-window violation, paying for both clipping and regularization
(\eqref{eq:exp-window-transfer}). Exponential moments, Jensen's inequality, and Doob's
inequality provide the remaining classical steps. Section~\ref{sec:proof-design}
explains why each modification is needed.

\subsection{Clipping, domination, and integrability}

For analysis only, define
\[
v_t^\circ=\max\{v_t,-L\},\qquad
\tilde v_t^\circ=v_t^\circ+\varepsilon_t,\qquad
\bar v_t^\circ=\operatorname{clip}(\tilde v_t^\circ,-1/\alpha,1/\alpha).
\]
These quantities are not queried or computed by the algorithm.

\begin{lemma}[Dominating observations and predictable regularizers]
\label{lem:exp-clip}
For every round,
\begin{align}
\bar v_t&\le\bar v_t^\circ,\qquad
\E[(\bar v_t^\circ)^2\mid\Fs_t]\le B_v^2,
\label{eq:exp-clip-second}\\
|\E[\bar v_t^\circ\mid\Fs_t]-v_t^\circ|
&\le\E[|\tilde v_t^\circ-\bar v_t^\circ|\mid\Fs_t]\le\alpha B_v^2,
\label{eq:exp-clip-bias}\\
\E[[\tilde v_t-\bar v_t]_+\mid\Fs_t]&\le\alpha B_v^2,
\label{eq:exp-positive-residual}\\
16\alpha B_v^2\le r_t&\le16\alpha B_v^2+4L\le\tfrac{17}{4}B_v.
\label{eq:exp-r-bound}
\end{align}
All terms whose expectations occur in the bounds below are integrable; no unconditional
moment of a negative raw level or raw value observation is needed.
\end{lemma}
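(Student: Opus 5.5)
Each item follows from elementary properties of clipping together with the moment bounds \eqref{eq:finite-variance} and Lemma~\ref{lem:anchor}; we take them in the order stated.

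\emph{Domination.} Since $v_t\le v_t^\circ$ and both observations carry the same error $\varepsilon_t$, we have $\tilde v_t\le\tilde v_t^\circ$. The map $\clip(\cdot,-1/\alpha,1/\alpha)$ is nondecreasing, so $\bar v_t\le\bar v_t^\circ$.

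\emph{Second moment.} Clipping onto an interval containing $0$ does not increase absolute value, so $(\bar v_t^\circ)^2\le(\tilde v_t^\circ)^2$. Expanding the square and using $\E[\varepsilon_t\mid\Fs_t]=0$ with $v_t^\circ$ being $\Fs_t$-measurable gives
\[
\E[(\tilde v_t^\circ)^2\mid\Fs_t]=(v_t^\circ)^2+\E[\varepsilon_t^2\mid\Fs_t]\le L^2+\sigma^2=B_v^2,
\]
where $|v_t^\circ|\le L$ comes from Lemma~\ref{lem:anchor}.

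\emph{Bias \eqref{eq:exp-clip-bias}.} Because $\E[\tilde v_t^\circ\mid\Fs_t]=v_t^\circ$, conditional Jensen bounds the bias by $\E[|\tilde v_t^\circ-\bar v_t^\circ|\mid\Fs_t]$. The key pointwise inequality is
\[
|y-\clip(y)|=(|y|-1/\alpha)_+\le\alpha y^2 .
\]
To check it, note that if $|y|\le1/\alpha$ the left side is $0$. Otherwise write $|y|=\theta/\alpha$ with $\theta>1$; then the left side is $(\theta-1)/\alpha$ and the right side is $\theta^2/\alpha$, and $\theta-1\le\theta^2$. Taking conditional expectations and applying the second-moment bound above gives $\alpha B_v^2$.

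\emph{Positive residual \eqref{eq:exp-positive-residual}.} This is the step needing the most care, because $\tilde v_t$ itself has no moment control when $v_t$ is very negative. I would handle it with a case split.
\begin{itemize}
\item If $\tilde v_t\le1/\alpha$, then $[\tilde v_t-\bar v_t]_+=0$. This holds because either $\tilde v_t$ is unchanged by clipping, or it is clipped up to $-1/\alpha$, which makes the difference negative.
\item If $\tilde v_t>1/\alpha$, then $\tilde v_t^\circ\ge\tilde v_t>1/\alpha$. The residual is $\tilde v_t-1/\alpha\le\tilde v_t^\circ-1/\alpha=|\tilde v_t^\circ-\bar v_t^\circ|$.
\end{itemize}
In both cases $[\tilde v_t-\bar v_t]_+\le|\tilde v_t^\circ-\bar v_t^\circ|\le\alpha(\tilde v_t^\circ)^2$, and the conditional bound $\alpha B_v^2$ follows as in the bias step.

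\emph{Regularizer bounds \eqref{eq:exp-r-bound}.} The lower bound $r_t\ge16\alpha B_v^2$ holds because $w_t\ge0$. For the upper bound, use $H_t=H_{t-1}+w_t^2$, which gives $\sqrt{H_t}\ge w_t$ and hence $w_t/(\sqrt{H_t}+\sqrt{H_{t-1}})\le1$. So the adaptive part is at most $4L$. Since $\alpha\le(64B_v)^{-1}$ and $L\le B_v$, we get $16\alpha B_v^2+4L\le B_v/4+4B_v=17B_v/4$.

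\emph{Integrability.} Each quantity whose expectation appears later is bounded by an integrable variable.
\begin{itemize}
\item $\bar v_t$, $\bar v_t^\circ$ and $r_t$ are bounded.
\item $v_t^\circ$ is bounded by $L$ from Lemma~\ref{lem:anchor}.
\item $\varepsilon_t$, $\tilde p_t$ and $\tilde q_t$ are square integrable, by taking expectations in \eqref{eq:finite-variance}.
\item The positive residual is dominated by $\alpha(\tilde v_t^\circ)^2$, which is integrable.
\end{itemize}
Negative raw levels $v_t$ enter only through $v_t^\circ$ or through quantities that are dominated, so no unconditional moment of a raw level or raw observation is ever needed.
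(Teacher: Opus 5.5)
Your proofs of the four displayed inequalities follow the paper's own argument. You use monotone clipping with the shared error $\varepsilon_t$, the expansion $\E[(\tilde v_t^\circ)^2\mid\Fs_t]=(v_t^\circ)^2+\E[\varepsilon_t^2\mid\Fs_t]$, and the pointwise bound $(|z|-1/\alpha)_+\le\alpha z^2$. Your case split is exactly the comparison $(\tilde v_t-1/\alpha)_+\le(\tilde v_t^\circ-1/\alpha)_+$, and the regularizer bound uses $\sqrt{H_t}\ge w_t$. All of this is correct.

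The integrability clause is not fully justified. Your list covers the clipped values, $r_t$, $v_t^\circ$, the raw errors and directions, and the positive residual. It omits the quantities whose expectations actually drive the later bounds: $Q_t$, $w_t$, $H_t$, $\Phi_\alpha(Q_t)$, $e^{\alpha Q_t}$, $\max_n Q_n$, and $\log(H_T/\alpha^2)$. It also does not give integrability of $d_t=\tilde p_t+w_t\tilde q_t$. Square-integrability of $\tilde q_t$ alone does not control $w_t\tilde q_t$ if $w_t$ is not known to be bounded. Without such control, summing the conditional drifts and cancelling them against the primal bound could produce $\infty-\infty$.

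The missing step is the one the paper uses. Since $r_t\ge0$ and $\bar v_t\le1/\alpha$, the reflected update gives $0\le Q_t\le Q_{t-1}+1/\alpha$, hence $0\le Q_t\le t/\alpha$ deterministically. This bounds $w_t$, $H_t$, the potential, and its exponential at every finite horizon. Then $d_t$ and $\eta_t\|d_t\|^2$ are integrable. You should also record that $|\ell_t(x_t)-\ell_t(u)|\le GD\le L$ and that $0\le\WCV_n\le Ln$, because $v_t\le L$. That is the precise sense in which raw negative levels need no moment. It is a one-line fix, but it is the substantive content of the integrability claim.
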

\begin{proof}
Lemma~\ref{lem:anchor} gives $|v_t^\circ|\le L$ and $v_t\le v_t^\circ$.
Monotonicity of clipping proves $\bar v_t\le\bar v_t^\circ$.
The oracle error conditions give
$\E[(\tilde v_t^\circ)^2\mid\Fs_t]=(v_t^\circ)^2+
\E[\varepsilon_t^2\mid\Fs_t]\le B_v^2$.
Clipping does not increase absolute value. For every real $z$,
\[
|z-\operatorname{clip}(z,-1/\alpha,1/\alpha)|
=(|z|-1/\alpha)_+\le\alpha z^2,
\]
which proves \eqref{eq:exp-clip-bias}. Moreover,
\[
[\tilde v_t-\bar v_t]_+=(\tilde v_t-1/\alpha)_+
\le(\tilde v_t^\circ-1/\alpha)_+
\le\alpha(\tilde v_t^\circ)^2,
\]
so \eqref{eq:exp-positive-residual} holds even for very negative $v_t$.
Since $\sqrt{H_t}+\sqrt{H_{t-1}}\ge w_t\ge0$, the adaptive part of $r_t$ lies in
$[0,4L]$. Use $\alpha B_v\le1/64$ and $L\le B_v$ for the final upper bound.
Finally, $r_t\ge0$ and $\bar v_t\le1/\alpha$ imply $0\le Q_t\le t/\alpha$.
Thus the queue, weights, accumulated weights, and potential have deterministic bounds
at each finite horizon. Conditional direction second moments imply integrability of all
primal terms. Loss differences and violation metrics are bounded by $Ln$; the positive
clipping residual has the preceding integrable bound. No exponential moment of the raw
observations, or absolute moment of negative levels, is used.
\end{proof}

\subsection{Predictable primal comparison}

\begin{lemma}[Variable-step primal bound]
\label{lem:exp-primal}
For every fixed $u\in\Ucal$ and $n\le T$,
\begin{equation}
\label{eq:exp-primal}
\E\Reg_n(u)+\E\sum_{t=1}^n w_tv_t^\circ\le2L\sqrt n+2L\E\sqrt{H_n}.
\end{equation}
\end{lemma}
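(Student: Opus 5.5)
The proof follows the standard projected-OGD template with predictable step sizes, applied to the direction $d_t=\tilde p_t+w_t\tilde q_t$.

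\emph{Pathwise step.} Nonexpansiveness of $\Pi_\Kset$ gives, for the fixed $u\in\Kset$,
\[
\|x_{t+1}-u\|^2\le\|x_t-u\|^2-2\eta_t\ip{d_t}{x_t-u}+\eta_t^2\|d_t\|^2 .
\]
Dividing by $2\eta_t$ and summing over $t\le n$, then rearranging the telescoping terms with $\eta_t$ nonincreasing and $\|x_t-u\|\le D$, yields \eqref{eq:proof-primal-projection}. The step sizes are nonincreasing because $t+H_t$ is nondecreasing, as $w_t^2\ge0$.

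\emph{Conditional expectation of both sides.} Since $w_t$ and $\eta_t$ are $\mathcal G_{t-1}\subseteq\Fs_t$-measurable and $x_t,u$ are $\Fs_t$-measurable, conditional unbiasedness gives
\[
\E[\ip{d_t}{x_t-u}\mid\Fs_t]=\ip{p_t}{x_t-u}+w_t\ip{q_t}{x_t-u}.
\]
Convexity gives $\ip{p_t}{x_t-u}\ge\ell_t(x_t)-\ell_t(u)$. Lemma~\ref{lem:anchor} together with $w_t\ge0$ gives $w_t\ip{q_t}{x_t-u}\ge w_tv_t^\circ$. Summing and taking expectations, the left side is bounded below by $\E\Reg_n(u)+\E\sum_t w_tv_t^\circ$; the integrability needed here is supplied by Lemma~\ref{lem:exp-clip}.

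For the right side, the squared triangle inequality gives
\[
\E[\|d_t\|^2\mid\Fs_t]\le2\Gh^2(1+w_t^2)=2\Gh^2(S_t-S_{t-1}),\qquad S_t=t+H_t .
\]
Because $\eta_t$ is predictable, we can condition before multiplying, so $\E[\eta_t\|d_t\|^2]\le\E[D\Gh(S_t-S_{t-1})/\sqrt{S_t}]$. The first term is $D^2/(2\eta_n)=D\Gh\sqrt{S_n}$.

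\emph{Summation bound.} The main step is to show
\[
\sum_{t\le n}\frac{S_t-S_{t-1}}{\sqrt{S_t}}\le2\bigl(\sqrt{S_n}-\sqrt{S_0}\bigr),
\]
which follows from the termwise inequality $(b-a)/\sqrt b\le2(\sqrt b-\sqrt a)$. Here $S_0=H_0=\alpha^2$, so $\sqrt{S_0}=\alpha$. The right side is therefore at most $D\Gh\sqrt{S_n}+D\Gh(\sqrt{S_n}-\alpha)\le2L\E\sqrt{S_n}$.

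\emph{Conclusion.} Finally, $\sqrt{n+H_n}\le\sqrt n+\sqrt{H_n}$ gives \eqref{eq:exp-primal}. The only delicate point is ordering the conditioning correctly, so that dependence between the same-round direction errors never multiplies a quantity chosen after feedback. Predictability of $w_t$ and $\eta_t$ handles this.
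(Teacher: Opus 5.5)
Your proposal is correct and follows the paper's proof essentially step for step. The paper uses the same ingredients in the same order: the pathwise projected-OGD inequality with predictable nonincreasing steps, and conditioning on $\Fs_t$ to apply unbiasedness, convexity and Lemma~\ref{lem:anchor} on the left and the squared-triangle moment bound $2\Gh^2(S_t-S_{t-1})$ on the right. It then uses the telescoping bound $\sum_t (S_t-S_{t-1})/\sqrt{S_t}\le 2(\sqrt{S_n}-\alpha)$ and finally $\sqrt{n+H_n}\le\sqrt n+\sqrt{H_n}$.
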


\begin{proof}
Write $d_t=\tilde p_t+w_t\tilde q_t$, $S_t=t+H_t$, and $S_0=\alpha^2$.
The step sizes $\eta_t=D/(2\Gh\sqrt{S_t})$ are nonincreasing. Expanding the
one-step projection inequality and telescoping the squared distances to $u$ yields
\begin{equation}
\label{eq:exp-ogd}
\sum_{t=1}^n\ip{d_t}{x_t-u}
\le\frac{D^2}{2\eta_n}+\frac12\sum_{t=1}^n\eta_t\|d_t\|^2.
\end{equation}
Both $w_t$ and $\eta_t$ are $\Fs_t$-measurable, and arbitrary dependence between
the two direction channels is allowed by
\[
\E[\|d_t\|^2\mid\Fs_t]\le2\Gh^2(1+w_t^2)
=2\Gh^2(S_t-S_{t-1}).
\]
Since
\[
\sum_{t=1}^n\frac{S_t-S_{t-1}}{\sqrt{S_t}}
\le2(\sqrt{S_n}-\sqrt{S_0}),
\]
the expectation of the right side of \eqref{eq:exp-ogd} is at most
$L\E\sqrt{S_n}+L\E(\sqrt{S_n}-\alpha)\le2L\E\sqrt{S_n}$.
Convexity and feasibility give
\[
\ip{p_t}{x_t-u}\ge\ell_t(x_t)-\ell_t(u),\qquad
\ip{q_t}{x_t-u}\ge v_t^\circ.
\]
Conditional unbiasedness and $w_t\ge0$ therefore lower-bound the expected left side
of \eqref{eq:exp-ogd} by the left side of \eqref{eq:exp-primal}.
Finally, $\sqrt{n+H_n}\le\sqrt n+\sqrt{H_n}$.
\end{proof}

\subsection{Reflected-potential drift and cancellation}

\begin{lemma}[Drift with a zero boundary derivative]
\label{lem:exp-drift}
For every round,
\begin{equation}
\label{eq:exp-drift}
\E[\Phi_\alpha(Q_t)-\Phi_\alpha(Q_{t-1})\mid\Fs_t]
\le w_tv_t^\circ-\tfrac12w_tr_t+60\alpha^2B_v^2.
\end{equation}
\end{lemma}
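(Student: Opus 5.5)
The plan is to compare the actual update with a dominating one and then expand the extended potential to second order. Put $F(q)=\Phi_\alpha([q]_+)$, which is nondecreasing on $\R$. Since $\bar v_t\le\bar v_t^\circ$ by Lemma~\ref{lem:exp-clip}, we have $Q_t=[Q_{t-1}+\bar v_t-r_t]_+$, and $\Phi_\alpha$ is increasing on $[0,\infty)$. Hence
\[
\Phi_\alpha(Q_t)=F(Q_{t-1}+\bar v_t-r_t)\le F(Q_{t-1}+z_t),\qquad z_t=\bar v_t^\circ-r_t .
\]
Clipping gives $\bar v_t^\circ\le1/\alpha$, and $r_t\ge0$, so $\alpha z_t\le1$. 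The Taylor bound \eqref{eq:proof-taylor} therefore applies with $q=Q_{t-1}\ge0$. To justify it, note that $F$ is $C^1$ because $\Phi_\alpha'(0)=0$. Its a.e.\ second derivative on the segment $[q,q+z_t]$ is at most $\alpha^2e^{\alpha q}e^{\alpha z_t^+}\le e\alpha^2e^{\alpha q}$, since $F''=0$ on the negative axis. The integral remainder then gives
\[
F(q+z)-F(q)\le w_tz+\tfrac e2\alpha^2e^{\alpha q}z^2\le w_tz+\tfrac32\alpha(w_t+\alpha)z^2,
\]
where I used $\alpha^2e^{\alpha Q_{t-1}}=\alpha(w_t+\alpha)$.

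Next, take conditional expectations given $\Fs_t$. The quantities $w_t$, $r_t$ and $Q_{t-1}$ are $\mathcal G_{t-1}\subseteq\Fs_t$-measurable, so they can be treated as constants. For the linear term, the bias bound \eqref{eq:exp-clip-bias} gives $w_t\E[z_t\mid\Fs_t]\le w_tv_t^\circ+\alpha B_v^2w_t-w_tr_t$. For the quadratic term, use $z_t^2\le2(\bar v_t^\circ)^2+2r_t^2$ together with $\E[(\bar v_t^\circ)^2\mid\Fs_t]\le B_v^2$. This bounds the quadratic term by $3\alpha(w_t+\alpha)(B_v^2+r_t^2)$. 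Expanding,
\[
\E[\Delta\Phi\mid\Fs_t]\le w_tv_t^\circ-w_tr_t+4\alpha B_v^2w_t+3\alpha w_tr_t^2+3\alpha^2(B_v^2+r_t^2).
\]
This is the first line of \eqref{eq:proof-drift}.

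The remaining step is bookkeeping with \eqref{eq:exp-r-bound}, and it is the only place where the constants are tight. Three estimates are needed.
\begin{itemize}
\item Since $r_t\ge16\alpha B_v^2$, we get $4\alpha B_v^2w_t\le\tfrac14w_tr_t$.
\item Since $r_t\le\tfrac{17}{4}B_v$ and $\alpha B_v\le1/64$, we get $3\alpha r_t\le\tfrac{51}{256}<\tfrac14$, so $3\alpha w_tr_t^2\le\tfrac14w_tr_t$.
\item We have $3\alpha^2r_t^2\le3\alpha^2(17/4)^2B_v^2<55\alpha^2B_v^2$. Adding $3\alpha^2B_v^2$ keeps the total below $60\alpha^2B_v^2$.
\end{itemize}
Collecting these, half of $w_tr_t$ survives, which is \eqref{eq:exp-drift}.

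The main obstacle is justifying the Taylor step when the segment crosses zero, that is, making sure reflection costs nothing at first order. This is exactly where $\Phi_\alpha'(0)=0$ is used. Because $F$ is $C^1$ and piecewise $C^2$, the remainder formula holds, and the first-order coefficient is $F'(Q_{t-1})=w_t$ irrespective of the sign of $Q_{t-1}+z_t$. The monotonicity reduction, from the actual observation to the dominated $\bar v_t^\circ$, is what lets the analysis use the bounded level $v_t^\circ$ in place of the possibly very negative $v_t$.
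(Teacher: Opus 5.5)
Your proposal is correct and follows the paper's proof essentially step for step. It uses the same monotone domination $\Phi_\alpha(Q_t)\le F(Q_{t-1}+\bar v_t^\circ-r_t)$, the same $C^1$ Taylor bound with $e/2<3/2$ and $\alpha^2e^{\alpha Q_{t-1}}=\alpha(w_t+\alpha)$, the same intermediate bound $w_tv_t^\circ-w_tr_t+4\alpha B_v^2w_t+3\alpha w_tr_t^2+3\alpha^2(B_v^2+r_t^2)$, and the same absorption via $16\alpha B_v^2\le r_t\le\tfrac{17}{4}B_v$, with $3\alpha r_t\le\tfrac{51}{256}$ and $3\alpha^2(B_v^2+r_t^2)\le\tfrac{915}{16}\alpha^2B_v^2<60\alpha^2B_v^2$.
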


\begin{proof}
Extend the potential to $\R$ by $F(q)=e^{\alpha\pos q}-\alpha\pos q-1$.
The extension is continuously differentiable at zero. For $q\ge0$ and any $z$ with
$\alpha z\le1$, Taylor's integral remainder gives
\begin{equation}
\label{eq:exp-taylor}
F(q+z)-F(q)\le\alpha(e^{\alpha q}-1)z+\tfrac32\alpha^2e^{\alpha q}z^2.
\end{equation}
Indeed, the almost-everywhere second derivative is zero on the negative half-line and
$\alpha^2e^{\alpha y}$ on the positive half-line. Along the segment from $q$ to $q+z$
it is at most $e\alpha^2e^{\alpha q}$, and $e/2<3/2$.
The function $F$ is nondecreasing. Since $\bar v_t\le\bar v_t^\circ$,
\[
\Phi_\alpha(Q_t)-\Phi_\alpha(Q_{t-1})
\le F(Q_{t-1}+\bar v_t^\circ-r_t)-F(Q_{t-1}).
\]
Apply \eqref{eq:exp-taylor} to this upper bound with $q=Q_{t-1}$ and
$z=\bar v_t^\circ-r_t$.
The increment condition holds since $r_t\ge0$ and $\alpha\bar v_t^\circ\le1$.
Using $\alpha^2e^{\alpha q}=\alpha(w_t+\alpha)$ and
$(\bar v_t^\circ-r_t)^2\le2(\bar v_t^\circ)^2+2r_t^2$ gives
\[
\Phi_\alpha(Q_t)-\Phi_\alpha(Q_{t-1})
\le w_t(\bar v_t^\circ-r_t)+3\alpha(w_t+\alpha)((\bar v_t^\circ)^2+r_t^2).
\]
Lemma~\ref{lem:exp-clip} bounds its conditional expectation by
\begin{equation}
\label{eq:exp-drift-raw}
w_tv_t^\circ-w_tr_t+4\alpha B_v^2w_t+3\alpha w_tr_t^2
+3\alpha^2(B_v^2+r_t^2).
\end{equation}
The regularizer absorbs the two positive terms proportional to $w_t$, because
\[
4\alpha B_v^2\le r_t/4,\qquad
3\alpha r_t\le3\alpha(16\alpha B_v^2+4L)\le\frac{51}{256}<\frac14.
\]
Also $3\alpha^2(B_v^2+r_t^2)\le(915/16)\alpha^2B_v^2\le60\alpha^2B_v^2$.
Substitution in \eqref{eq:exp-drift-raw} proves the claim. The zero derivative at the
reflection boundary is what avoids a first-order reflection cost.
\end{proof}

\begin{proof}[Proof of \eqref{eq:exp-master}]
The adaptive regularizer obeys the exact identity
\begin{equation}
\label{eq:exp-cancel}
\tfrac12w_tr_t
=8\alpha B_v^2w_t+2L\frac{w_t^2}{\sqrt{H_t}+\sqrt{H_{t-1}}}
=8\alpha B_v^2w_t+2L(\sqrt{H_t}-\sqrt{H_{t-1}}).
\end{equation}
There is no division by zero since $H_{t-1}\ge\alpha^2>0$.
Sum \eqref{eq:exp-drift}, take expectations, and use \eqref{eq:exp-primal} to obtain
\begin{align*}
\E\Reg_n(u)+\E\Phi_\alpha(Q_n)
&\le2L\sqrt n+2L\E\sqrt{H_n}-\tfrac12\E\sum_{t=1}^n w_tr_t
+60\alpha^2B_v^2n\\
&=2L\sqrt n+2L\alpha-8\alpha B_v^2\E\sum_{t=1}^n w_t
+60\alpha^2B_v^2n.
\end{align*}
Here $\Phi_\alpha(Q_0)=0$ and $\sqrt{H_0}=\alpha$.
\end{proof}

\subsection{Running maximum and total regularization}

\begin{lemma}[Exponential moments and the running maximum]
\label{lem:exp-max}
With $K_0=4+6L$ and $\Lambda_T=\log((T+1)K_0T)$,
\begin{equation}
\label{eq:exp-max}
\E e^{\alpha Q_n}\le K_0T\quad(0\le n\le T),\qquad
\E\max_{0\le n\le T}Q_n\le\Lambda_T/\alpha.
\end{equation}
\end{lemma}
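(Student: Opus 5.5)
The plan is to read both claims off the master inequality \eqref{eq:exp-master}. Its left side contains $\E\Phi_\alpha(Q_n)$, which differs from $\E e^{\alpha Q_n}$ only by the linear term $\alpha Q_n+1$. The remaining work is to lower-bound the other two left-side terms and upper-bound the right side by a multiple of $T$.

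\textbf{Step 1: the left side.} I would drop the nonnegative term $8\alpha B_v^2\E\sum_t w_t$. For regret, I would use the pathwise bound $\Reg_n(u)\ge-Ln$. This holds because each loss is $G$-Lipschitz and $\|x_t-u\|\le D$, so $|\ell_t(x_t)-\ell_t(u)|\le GD\le L$. This gives
\[
\E\Phi_\alpha(Q_n)\le Ln+2L\sqrt n+2L\alpha+60\alpha^2B_v^2n .
\]

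\textbf{Step 2: convert to an exponential moment.} Write $e^{\alpha Q_n}=\Phi_\alpha(Q_n)+\alpha Q_n+1$. To control the linear term I would use $\alpha Q\le \tfrac12 e^{\alpha Q}$ for $Q\ge0$, which follows from $e^y\ge ey\ge 2y$. Then $\tfrac12 e^{\alpha Q_n}\le \Phi_\alpha(Q_n)+1$, so
\[
\E e^{\alpha Q_n}\le 2+2Ln+4L\sqrt n+4L\alpha+120\alpha^2B_v^2n .
\]
Next I would use $n\le T$, $\sqrt n\le T$, $\alpha\le(64B_v)^{-1}\le 1/(64L)$ and $\alpha^2B_v^2\le 1/4096$. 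Each term is then at most a constant times $T$, and together they fit under $(4+6L)T$. For example, $2+4L\alpha+120/4096\le 4\le 4T$ and $2Ln+4L\sqrt n\le 6LT$. The only delicate point is $n=0$: there $Q_0=0$ gives $\E e^{\alpha Q_0}=1\le K_0T$ directly.

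\textbf{Step 3: the running maximum.} Since $e^{\alpha M_T}=\max_n e^{\alpha Q_n}\le\sum_{n=0}^T e^{\alpha Q_n}$ pathwise, Jensen applied to the concave logarithm gives
\[
\alpha\E M_T\le\log\E e^{\alpha M_T}\le\log\Big(\sum_{n=0}^T\E e^{\alpha Q_n}\Big)\le\log((T+1)K_0T)=\Lambda_T .
\]
Integrability is not an issue, because $0\le Q_n\le n/\alpha$ deterministically by Lemma~\ref{lem:exp-clip}.

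I expect the main obstacle to be the constant bookkeeping in Step 2. The linear term $\alpha Q_n$ must be absorbed without circularity, and all terms must fit exactly under $K_0=4+6L$. If the crude $\tfrac12$ bound turns out to be too lossy, I would instead use $\alpha Q_n\le \Phi_\alpha(Q_n)+1$. That inequality holds because $e^y-y-1\ge y-1$ (since $e^y\ge 2y-0$ for $y\ge0$), and it gives $\E e^{\alpha Q_n}\le 2\E\Phi_\alpha(Q_n)+2$, which fits the same constant budget.
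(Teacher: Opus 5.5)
Your proposal is correct and follows the paper's proof step for step. You drop the weighted-sum term and use $\Reg_n(u)\ge -Ln$, convert through $e^{\alpha q}\le 2(1+\Phi_\alpha(q))$, absorb everything into $(4+6L)T$ with $n=0$ handled separately, and finish with $e^{\alpha M_T}\le\sum_{n=0}^T e^{\alpha Q_n}$ plus Jensen. The only slip is in the bookkeeping: the last constant term is $120\alpha^2B_v^2n\le(120/4096)T$, not $120/4096$, and this still fits under $4T$.
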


\begin{proof}
Since $\Reg_n(u)\ge-Ln$, \eqref{eq:exp-master} gives
$\E\Phi_\alpha(Q_n)\le Ln+2L\sqrt n+2L\alpha+60\alpha^2B_v^2n$.
For $z\ge0$, $2z\le e^z$, hence $e^{\alpha q}\le2(1+\Phi_\alpha(q))$ for $q\ge0$.
Consequently,
\[
\E e^{\alpha Q_n}\le2+2Ln+4L\sqrt n+4L\alpha+120\alpha^2B_v^2n.
\]
For $1\le n\le T$, use $\sqrt n\le n$, $4L\alpha\le1/16$, and
$120\alpha^2B_v^2\le120/4096$ to bound this by $(4+6L)T$.
The case $n=0$ is immediate. Write $M_T=\max_{0\le n\le T}Q_n$.
Pathwise $e^{\alpha M_T}\le\sum_{n=0}^T e^{\alpha Q_n}$, so Jensen's inequality yields
$\alpha\E M_T\le\log\E e^{\alpha M_T}\le\Lambda_T$.
\end{proof}

\begin{lemma}[Total regularization cost]
\label{lem:exp-reg-cost}
\begin{equation}
\label{eq:exp-reg-cost}
\E\sum_{t=1}^T r_t\le16\alpha B_v^2T+4L\sqrt{2T\Lambda_T}.
\end{equation}
\end{lemma}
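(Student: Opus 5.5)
The regularizer $r_t$ has two parts. The constant part $16\alpha B_v^2$ contributes exactly $16\alpha B_v^2T$ after summing, so the work is in the adaptive part $4Lw_t/(\sqrt{H_t}+\sqrt{H_{t-1}})$. It suffices to prove the pathwise bound
\[
\sum_{t=1}^T\frac{w_t}{\sqrt{H_t}+\sqrt{H_{t-1}}}\le\sqrt{T\log(H_T/\alpha^2)}
\]
and then show $\E\log(H_T/\alpha^2)\le2\Lambda_T$. Jensen's inequality for the concave square root then gives $\E\sqrt{T\log(H_T/\alpha^2)}\le\sqrt{2T\Lambda_T}$. Multiplying by $4L$ yields the claimed term.

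For the pathwise step, I bound each summand using $\sqrt{H_t}+\sqrt{H_{t-1}}\ge\sqrt{H_t}$, which gives at most $w_t/\sqrt{H_t}$. Cauchy--Schwarz then gives
\[
\sum_t w_t/\sqrt{H_t}\le\sqrt{T\sum_t w_t^2/H_t}.
\]
Since $w_t^2=H_t-H_{t-1}$ and $(b-a)/b\le\log(b/a)$ for $0<a\le b$, the inner sum telescopes to at most $\log(H_T/H_0)=\log(H_T/\alpha^2)$. Here $H_0=\alpha^2>0$, so every ratio is well defined.

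For the expectation step, I write $H_T/\alpha^2=1+\sum_t(w_t/\alpha)^2$. Because $w_t\ge0$, this is at most $(1+\sum_tw_t/\alpha)^2$, so $\log(H_T/\alpha^2)\le2\log(1+\sum_tw_t/\alpha)$. Next, $w_t/\alpha=e^{\alpha Q_{t-1}}-1$, which gives
\[
1+\sum_{t=1}^T w_t/\alpha=1-T+\sum_{n=0}^{T-1}e^{\alpha Q_n}\le\sum_{n=0}^{T}e^{\alpha Q_n}.
\]
The inequality holds because each exponential is at least $1$ and $T\ge1$. By Jensen's inequality for the logarithm and Lemma~\ref{lem:exp-max},
\[
\E\log\sum_{n=0}^Te^{\alpha Q_n}\le\log\bigl((T+1)K_0T\bigr)=\Lambda_T.
\]
This proves $\E\log(H_T/\alpha^2)\le2\Lambda_T$.

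The main obstacle is the second step. It needs an expectation bound on a logarithm of an accumulated quantity that is random and not monotone in any simple way. The key is to reduce it to the same sum of exponential moments already controlled in Lemma~\ref{lem:exp-max}, so that no new drift argument is required. The remaining points are routine but worth checking: the bound $\sqrt{H_t}+\sqrt{H_{t-1}}\ge\sqrt{H_t}$ loses only a constant factor, and integrability holds because Lemma~\ref{lem:exp-clip} gives $Q_t\le t/\alpha$.
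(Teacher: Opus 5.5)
Your proposal is correct and follows essentially the same route as the paper's proof: the same pathwise Cauchy--Schwarz and telescoping bound by $\sqrt{T\log(H_T/\alpha^2)}$, the same squaring trick $H_T/\alpha^2\le(1+\sum_t w_t/\alpha)^2$, Jensen with the exponential moments of Lemma~\ref{lem:exp-max}, and a final Jensen for the square root. The only difference is cosmetic bookkeeping. You bound $1+\sum_t w_t/\alpha$ pathwise by $\sum_{n=0}^T e^{\alpha Q_n}$, reusing the quantity from the running-maximum argument. The paper instead takes expectations first to get $1+K_0T^2$ and then compares this with $(T+1)K_0T$.
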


\begin{proof}
Following the regularization-cost argument of \citet[Lem.~11]{yu2026withoutslater},
Cauchy--Schwarz and $H_t-H_{t-1}=w_t^2$ imply
\[
\sum_{t=1}^T\frac{w_t}{\sqrt{H_t}+\sqrt{H_{t-1}}}
\le\sum_{t=1}^T\frac{w_t}{\sqrt{H_t}}
\le\sqrt{T\sum_{t=1}^T\frac{H_t-H_{t-1}}{H_t}}
\le\sqrt{T\log\frac{H_T}{\alpha^2}},
\]
where the last step uses $(b-a)/b\le\log(b/a)$ for $0<a\le b$.
Set $a_t=w_t/\alpha=e^{\alpha Q_{t-1}}-1\ge0$. Then
\[
\frac{H_T}{\alpha^2}=1+\sum_{t=1}^T a_t^2
\le\left(1+\sum_{t=1}^T a_t\right)^2.
\]
By Jensen's inequality and Lemma~\ref{lem:exp-max},
\[
\E\log\frac{H_T}{\alpha^2}
\le2\log\left(1+\sum_{t=1}^T\E a_t\right)
\le2\log(1+K_0T^2)\le2\Lambda_T.
\]
Apply Jensen's inequality once more to the square root and add $16\alpha B_v^2T$.
\end{proof}

\subsection{Transfer to true windows and parameter choice}

\begin{proof}[Proof of \eqref{eq:exp-wcv} and \eqref{eq:exp-rates}]
The reflected update gives $Q_t\ge Q_{t-1}+\bar v_t-r_t$. Thus, for every window
$[s,e]\subseteq[1,T]$,
\[
\sum_{t=s}^e\bar v_t\le Q_e-Q_{s-1}+\sum_{t=s}^e r_t
\le M_T+\sum_{t=1}^T r_t.
\]
Write $S_n^\varepsilon=\sum_{t=1}^n\varepsilon_t$, with $S_0^\varepsilon=0$.
Since $v_t=\bar v_t+(\tilde v_t-\bar v_t)-\varepsilon_t$, pathwise
\begin{equation}
\label{eq:exp-window-transfer}
\WCV_T\le M_T+\sum_{t=1}^T r_t+\sum_{t=1}^T[\tilde v_t-\bar v_t]_+
+2\max_{0\le n\le T}|S_n^\varepsilon|.
\end{equation}
The expected positive clipping residual is at most $\alpha B_v^2T$ by
\eqref{eq:exp-positive-residual}. Negative clipping residuals only reduce the window sum. The tower property transfers
$\E[\varepsilon_t\mid\Fs_t]=0$ to $\mathcal G_{t-1}$, so $S_n^\varepsilon$ is a
square-integrable martingale. Orthogonality of its increments and Doob's $L^2$
maximal inequality~\citep{doob1953stochastic} give
\[
\E\max_{0\le n\le T}|S_n^\varepsilon|
\le2\sqrt{\E(S_T^\varepsilon)^2}\le2\sigma\sqrt T.
\]
Take expectations in \eqref{eq:exp-window-transfer} and apply
Lemmas~\ref{lem:exp-max} and~\ref{lem:exp-reg-cost} to prove \eqref{eq:exp-wcv}.

For $\alpha=\sqrt{\log(eT)}/(64B_v\sqrt T)$, the restriction
$\alpha B_v\le1/64$ follows from $\log(eT)\le T$. Dropping the nonnegative
potential and weighted-sum terms in \eqref{eq:exp-master} yields
\[
\E\Reg_T(u)\le2L\sqrt T+2L\alpha+\frac{15}{1024}\log(eT)
\le2L\sqrt T+\frac1{32}+\frac{15}{1024}\log(eT).
\]
Substitution in \eqref{eq:exp-wcv} gives the explicit budget bound
\begin{equation}
\label{eq:exp-wcv-explicit}
\E\WCV_T\le64B_v\Lambda_T\sqrt{\frac{T}{\log(eT)}}
+\frac{17}{64}B_v\sqrt{T\log(eT)}+4L\sqrt{2T\Lambda_T}+4\sigma\sqrt T.
\end{equation}
For fixed problem bounds, $\Lambda_T=O(\log(eT))$ and $\log(eT)=O(\sqrt T)$,
proving both rates in \eqref{eq:exp-rates}.
\end{proof}

\subsection{Finite-horizon interpretation of the constants}
\label{app:finite-horizon-interpretation}

The displayed constants prioritize a transparent finite-variance proof rather than
optimized finite-horizon performance. Lemma~\ref{lem:anchor} gives $v_t\le L$
pathwise, so $\WCV_T\le LT$ independently of the algorithm. Consequently, the
right side of \eqref{eq:exp-wcv-explicit} can always be capped at $LT$.
For example, with $D=\Gh=\sigma=1$ and $T=10^4$, we have $B_v=\sqrt2$ and
$\Lambda_T=\log(10(T+1)T)$. Substitution gives approximately $61{,}795$ in
\eqref{eq:exp-wcv-explicit}, exceeding the trivial bound $LT=10{,}000$.
At this horizon the explicit certificate therefore does not establish an improvement
over the trivial bound. This does not contradict the asymptotic rate or demonstrate
poor realized performance. The numerical checks validate implementation identities,
not empirical regret or budget scaling; tighter constants and practical parameter
selection are not established by the present analysis.

\section{Unknown horizon and interpretation of the guarantee}
\label{app:anytime}

\begin{proof}[Proof of Corollary~\ref{cor:anytime}]
Use epochs $I_j=\{2^j,\ldots,2^{j+1}-1\}$ of planned length $h_j=2^j$,
$j=0,1,\ldots$. At the first round of each epoch reset $Q_0=0$ and
$H_0=\alpha_j^2$, choose an action in $\Kset$, and set
$\alpha_j=\sqrt{\log(eh_j)}/(64B_v\sqrt{h_j})$.
Use local round numbers in all other updates.

For any prefix of $n\le h$ rounds of an epoch with planned length $h$,
Theorem~\ref{thm:exp} gives
\[
\E\Reg_n(u)\le2L\sqrt n+2L\alpha_h+60\alpha_h^2 B_v^2 n
\le C_R\sqrt h,
\]
where $C_R$ depends only on $L,B_v$. Indeed,
$\log(eh)/\sqrt h$ is uniformly bounded for $h\ge1$.
Repeating the maximum-queue and regularization bounds through $n$, with the same
$\alpha_h$, gives
\[
\E\WCV_n\le\frac{\Lambda_n}{\alpha_h}+17\alpha_h B_v^2n
+4L\sqrt{2n\Lambda_n}+4\sigma\sqrt n
\le C_W\sqrt{h\log(eh)},
\]
where $\Lambda_n=\log((n+1)K_0n)$. The bounds apply after any past history because
conditional unbiasedness and second-moment envelopes are unchanged by deterministic
restarts. The same fixed comparator is feasible in every epoch being considered.

For $J=\lfloor\log_2 T\rfloor$, epoch $J$ is the last nonempty epoch; it may be
unfinished. Regret adds across the epochs. Every global window meets each epoch in an
empty set or a contiguous window, hence, pathwise,
\[
\WCV_T\le\sum_{j=0}^J\WCV^{(j)}_{|I_j\cap[1,T]|}.
\]
Since $2^J\le T$,
\[
\sum_{j=0}^J\sqrt{2^j}\le\frac{\sqrt T}{1-2^{-1/2}},\qquad
\sum_{j=0}^J\sqrt{2^j\log(e2^j)}
\le\frac{\sqrt{T\log(eT)}}{1-2^{-1/2}}.
\]
Summing the expectation bounds proves both claims.
\end{proof}

\paragraph{Feedback and implementation.}
The reference implementation fixes all update coefficients before receiving the feedback
triple. Exponential weights can be stored in logarithmic form: maintain $\log H_t$,
compute $\log w_t$ using a stable log-exponential-minus-one evaluation, and form the
bounded ratios $w_t/(\sqrt{H_t}+\sqrt{H_{t-1}})$ and
$w_t/\sqrt{t+H_t}$ directly. This avoids numerical overflow in the weight while leaving
the mathematical update unchanged. Numerical tests of identities and certificates are
provided as implementation checks, not experimental evidence for a worst-case theorem.

\end{document}